\documentclass[11pt]{article}
\usepackage[a4paper,margin=1in]{geometry}
\usepackage{graphicx}
\graphicspath{{figures/}}

\usepackage{booktabs}
\usepackage{amsmath,amssymb,amsfonts}
\usepackage{subcaption}
\usepackage{bbm}
\usepackage{array}
\usepackage{multirow}
\usepackage[most]{tcolorbox}
\usepackage{url}
\usepackage[T1]{fontenc}

\usepackage{pifont}
\usepackage{fontawesome5}
\usepackage[colorlinks=true, linkcolor=blue, citecolor=blue, urlcolor=blue]{hyperref}
\newcommand{\cmark}{\ding{51}}
\newcommand{\xmark}{\ding{55}}

\newtheorem{theorem}{Theorem}

\newtheorem{remark}{Remark}

\newenvironment{proof}{{\noindent\it Proof:}\quad}{\hfill $\square$\par}

\DeclareMathOperator{\LeakyReLU}{LeakyReLU}

\title{Gated Graph Attention Networks with Learnable Temperature}

\author{
Zhongtian Ma\textsuperscript{1,2}\,\textsuperscript{\href{mailto:mazhongtian@mail.nwpu.edu.cn}{\textcolor{black}{\faEnvelope[regular]}}}
\and Hao Wu\textsuperscript{1}\,\textsuperscript{\href{mailto:wu_hao@mail.nwpu.edu.cn}{\textcolor{black}{\faEnvelope[regular]}}}
\and Yexin Zhang\textsuperscript{1,2}\,\textsuperscript{\href{mailto:yexin.zhang@mail.nwpu.edu.cn}{\textcolor{black}{\faEnvelope[regular]}}}
\and Qiaosheng Zhang\textsuperscript{2}\,\textsuperscript{\href{mailto:zhangqiaosheng@pjlab.org.cn}{\textcolor{black}{\faEnvelope[regular]}}}
\and Zhen Wang\textsuperscript{1}\,\textsuperscript{\href{mailto:w-zhen@nwpu.edu.cn}{\textcolor{black}{\faEnvelope[regular]}}}
\\[0.5em]
\textsuperscript{1}School of Cybersecurity, Northwestern Polytechnical University, Xi'an 710072, China\\
\textsuperscript{2}Shanghai Artificial Intelligence Laboratory, Shanghai 200232, China
}

\date{\vspace{-2em}}

\newcommand{\paperabstract}{%
Graph attention networks learn neighbor importance through data-dependent coefficients, but standard layers lack explicit control over unreliable feature dimensions and use fixed sharpness of attention coefficient distributions. This paper proposes gated graph attention and learnable temperature for common graph attention mechanisms. Gated graph attention filters feature or message responses to reduce the influence of unreliable dimensions, while learnable temperature dynamically adjusts the sharpness of the attention coefficient distribution. Experiments on homogeneous and heterophilic heterogeneous benchmarks show that the proposed variants consistently improve the corresponding graph attention backbones, and controlled noise studies further verify their behavior under feature perturbations. Theoretical analysis explains these results by showing that gating improves robustness when only part of the feature coordinates are reliable, while temperature is beneficial when global noise weakens the discriminability of node features.%
}

\begin{document}

\maketitle

\begin{abstract}
\paperabstract
\end{abstract}

\section{Introduction}

Graph neural networks (GNNs) have become a widely used framework for learning representations on graph-structured data, with applications in citation networks~\cite{wu2020comprehensive}, social networks~\cite{fan2019graph}, biology~\cite{gligorijevic2021structure}, computer vision~\cite{9590574} and recommendation systems~\cite{wu2022graph}. Early graph convolutional networks (GCNs) typically aggregate neighborhood information with fixed structure-dependent weights, which makes them efficient and stable but can be limiting when neighboring nodes contribute unequally to the target node~\cite{kipf_semi-supervised_2017,hamilton_inductive_2017, rong_dropedge_2020}. Later graph attention networks (GATs) address this limitation by computing data-dependent coefficients over neighbors, so that more informative neighbors can receive larger coefficients during message passing~\cite{velickovic2018graph,wang_heterogeneous_2019}. Most of these models can be viewed under a neighborhood message-passing paradigm, where each node updates its representation by collecting and transforming information from its neighbors~\cite{feng2022powerful}.

Despite this flexibility, standard graph attention mainly adapts the relative importance of neighbors, while leaving two aspects of the attention layer less directly controlled. First, it maps unnormalized neighbor scores, or attention logits, to attention coefficients using a fixed softmax function. As a result, the sharpness of the resulting attention distribution is not explicitly adaptive. In mixed or noisy neighborhoods, this may cause the model to assign overly concentrated coefficients to unreliable neighbors, or to use an unsuitable level of smoothing across different layers and datasets~\cite{ma2025graphattention, zhu_beyond_2020}.
Second, standard graph attention offers limited control over which feature dimensions are propagated after aggregation. In high-dimensional, sparse, or partially corrupted node features, different coordinates can have very different reliability. Without an explicit feature-level filtering mechanism, noisy coordinates may still affect the attention logits or be carried into the node update~\cite{ferrini2026rethinking}.

To address these two limitations, we introduce two lightweight modifications to graph attention. The first is learnable temperature, which rescales attention logits before softmax normalization and allows each layer to adapt the sharpness of its attention distribution. The second is gated graph attention, which modulates either the aggregated node update or the projected messages used in attention aggregation, enabling the model to suppress less reliable feature responses. These two mechanisms act on different parts of the attention layer and can be applied independently or jointly to different GAT backbones. This yields a unified family of attention variants that controls not only where information is aggregated from, but also how sharply and how selectively it is propagated.


We further provide a supporting theoretical analysis to explain why these mechanisms are useful. The analysis is conducted under a two-class contextual stochastic block model (CSBM)~\cite{deshpande2018contextual}, which offers a tractable setting for studying noisy node features and attention aggregation. Specifically, we consider two common forms of feature noise, namely global perturbations that reduce the separability of node features and coordinate-wise missing or corrupted entries that leave only part of the feature vector reliable. Under global perturbations, learnable temperature can improve the node-level signal-to-noise ratio by mitigating noise-driven concentration in attention coefficients. Under coordinate-wise corruption, gated graph attention can suppress unreliable feature dimensions and improve the first-order node-level signal-to-noise ratio. These results indicate that the two mechanisms target complementary failure modes in graph attention. We further conduct extensive experiments on homogeneous and heterophilic heterogeneous benchmarks, together with controlled noise experiments, to verify the effectiveness and behavior of the proposed mechanisms.


Our main contributions are summarized as follows:
\begin{itemize}
    \item We propose two lightweight and plug-in modifications to graph attention: learnable temperature for adapting attention sharpness, and gated graph attention for modulating feature-level or message-level contributions. These mechanisms can be applied to both GAT~\cite{velickovic2018graph} and GATv2~\cite{brody2022attentive}, either independently or jointly.

    \item We provide a supporting theoretical analysis under CSBM, showing when the two mechanisms are effective. For learnable temperature, Theorem~\ref{thm:temp_gaussian} shows that it can improve attention behavior when attention coefficients are affected by unreliable neighborhood signals. For gated graph attention, Theorem~\ref{thm:gate_missing} shows that it can be beneficial when only part of the feature dimensions is reliable or task-relevant.

    \item We evaluate the proposed variants on six homogeneous graph datasets and five heterophilic heterogeneous datasets. Overall, the proposed variants show stronger empirical performance than the corresponding GAT and GATv2 backbones across the evaluated benchmarks, and the controlled noise experiments further show behavior consistent with our theoretical analysis.

\end{itemize}

\section{Related Works}
\subsection{Adaptive Attention Modulation}

Standard attention mainly controls aggregation through normalized coefficients~\cite{vaswani2017attention}, but it does not explicitly decide whether an attention output should strongly update, weakly update, or nearly skip the current representation. Gated attention addresses this limitation by adding an explicit modulation path. Bondarenko et al.~\cite{bondarenko2023quantizable} show that some Transformer heads simulate no-op or weak updates by producing extreme logits, causing activation outliers and poor quantization behavior; their gated attention applies a lightweight sigmoid gate to each head output so that the model can directly control update magnitude. Qiu et al.~\cite{qiu2026gated} further conduct a systematic study of gated softmax attention in large language models and find that head-specific sigmoid gating after scaled dot-product attention improves performance, training stability, scaling behavior, and attention-sink mitigation. Related Transformer variants also use gates to regulate attention behavior, including content-dependent forget gates in Forgetting Transformer~\cite{lin2025forgetting}, Gated Attention Units in FLASH~\cite{hua2022transformer}, moving-average equipped gated attention in MEGA~\cite{ma2023mega}, and routing-style conditional attention in SwitchHead~\cite{csordas2024switchhead}. These works support the view that attention benefits from mechanisms that control not only where information comes from, but also how strongly an attention branch contributes.

Another line of work modulates attention by controlling the sharpness of the attention distribution through temperature or learnable scaling. In scaled dot-product attention, the factor $\sqrt{d_k}$ can be interpreted as a fixed temperature-like normalization~\cite{vaswani2017attention}, while Query-Key Normalization replaces this fixed scaling with a learnable parameter after normalizing queries and keys~\cite{henry2020query}. Adaptive sparse attention likewise investigates how learned parameters affect whether attention is more concentrated or more diffuse~\cite{correia2019adaptively}. In graph learning, Ma et al.~\cite{ma2025graphattention} show that graph attention is not always advantageous and may underperform GCN in certain regimes. This observation further motivates our use of learnable temperature to adaptively regulate attention strength in graph message passing.

\subsection{Attention-based Graph Neural Networks}
GATs extend message passing by assigning data-dependent coefficients to neighboring nodes~\cite{velickovic2018graph}. GATv2 improves the expressiveness of graph attention by replacing the static attention behavior of GAT with a more dynamic scoring mechanism~\cite{brody2022attentive}. Intranode attention has also been introduced to reweight discriminative node features within GATs~\cite{jia_look_2026}. In addition, recent work has also explored graph attention mechanisms on heterophilic and heterogeneous graphs, where attention is used to aggregate information from different types of nodes, relations, or semantic contexts~\cite{zhang_hopgat_2026, wang_heterophily_aware_2024, sun_high_frequency_2024}. Overall, this line of work uses attention as a local neighborhood aggregation rule inside GNN message passing.

Another related line is Graph Transformers, which adapt Transformer-style self-attention to graph representation learning. For example, Graphormer injects structural encodings such as centrality, spatial distance, and edge information into global self-attention~\cite{ying2021transformers, YANG2026113320}. Although Graph Transformers and GAT-style models both use attention, their attention mechanisms play different roles. Graph Transformers usually perform global self-attention over graph tokens with structural biases, whereas GAT and GATv2 compute masked neighborhood attention for message passing. Our method targets the latter setting by modifying the graph attention aggregation mechanism rather than replacing the GNN with a Transformer self-attention architecture.

Gating has also been introduced into graph attention. GaAN introduces gates into multi-head graph attention by assigning a gate to each attention head, allowing the model to reweight different heads during aggregation~\cite{zhang2018gaan}. This provides an example of gated modulation in graph attention, although its gating mechanism is designed mainly for head-level combination.
SigGate-GT applies sigmoid gates to graph transformer head outputs to reduce uninformative global interactions and alleviate over-smoothing~\cite{guo2026siggate}. Our method differs from these works by jointly controlling message-passing update intensity and neighborhood attention sharpness, allowing the model to learn not only where to attend, but also how strongly and how selectively to propagate graph information.

\section{Method}\label{sec:method}
This section first reviews graph convolution and graph attention under a unified message-passing notation, and then introduces two modifications to graph attention: learnable temperature and gated graph attention. Standard graph attention controls the relative importance of neighbors through normalized attention coefficients, but it leaves two useful aspects less explicit: the sharpness of the attention distribution and the strength with which the aggregated message contributes to the node update. To address these issues, we use learnable temperature to adapt attention sharpness and gated graph attention to modulate feature-level or message-level contributions. The two mechanisms act on different parts of the attention layer and can be used independently or jointly. Figure~\ref{fig:method-overview} summarizes both modifications, which are lightweight and preserve the neighborhood aggregation framework; see Remark~\ref{remark:lightweight} for details.

\begin{figure}[t]
\centering
\includegraphics[width=\linewidth]{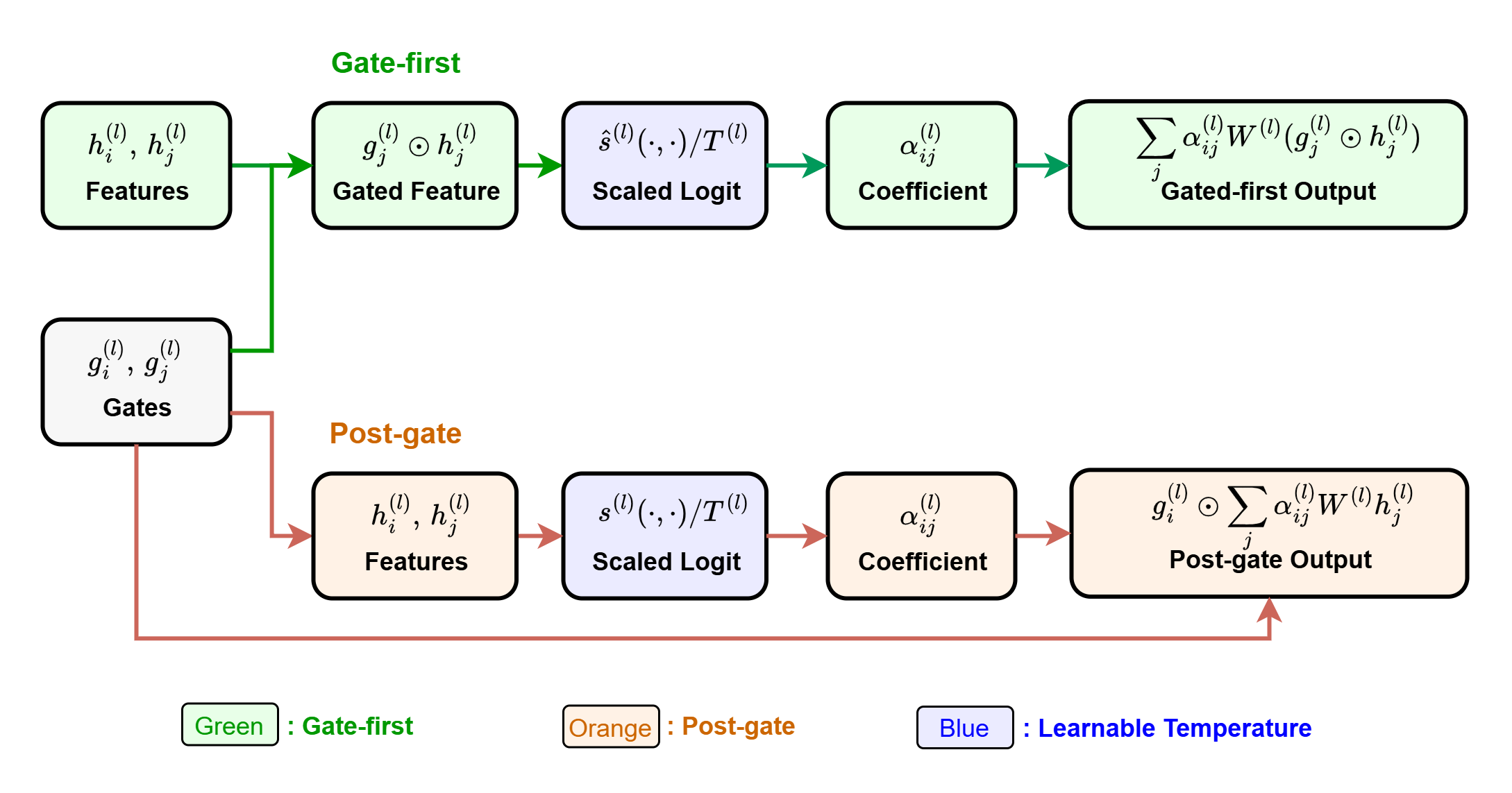}
\caption{Overview of the proposed graph attention modifications. Learnable temperature rescales attention logits, and gated graph attention modulates feature or message dimensions.}
\label{fig:method-overview}
\end{figure}

Let $G=(V,E)$ be a graph, where $V$ and $E$ denote the node and edge sets. For each node $i\in V$, let $h_i^{(l)}\in\mathbb{R}^{d_l}$ denote its representation at layer $l$, with $h_i^{(0)}=x_i$ the input feature. In graph convolution and graph attention layers, self-loops are commonly included by aggregating over $\mathcal N(i)\cup\{i\}$.

\subsection{From Graph Convolution to Graph Attention}

A graph convolutional layer updates node representations by aggregating neighborhood features with fixed, structure-dependent weights:
\begin{equation}
h_i^{(l+1)}
=
\phi \left(
\sum_{j\in\mathcal N(i)\cup\{i\}}
\hat A_{ij}\, W^{(l)} h_j^{(l)}
\right),
\end{equation}
where $\hat A_{ij}$ is the normalized adjacency coefficient, $W^{(l)}$ is a learnable weight matrix, and $\phi(\cdot)$ is a nonlinear activation. The aggregation weights depend only on graph structure.

Graph attention replaces $\hat A_{ij}$ with data-dependent coefficients. In general form, the attention logit is
\begin{equation}
e_{ij}^{(l)} = s^{(l)}\!\left(h_i^{(l)}, h_j^{(l)}\right),
\qquad j\in\mathcal N(i)\cup\{i\},
\end{equation}
where $s^{(l)}(\cdot,\cdot)$ is an attention scoring function, which may include learnable projections of the endpoint features. We refer to $e_{ij}^{(l)}$ as the attention logit, i.e., the unnormalized value before softmax normalization. The normalized attention coefficient is
\begin{equation}
\alpha_{ij}^{(l)}
=
\frac{
\exp(e_{ij}^{(l)})
}{
\sum_{r\in\mathcal N(i)\cup\{i\}} \exp(e_{ir}^{(l)})
},
\end{equation}
and the updated representation is
\begin{equation}
h_i^{(l+1)}
=
\sum_{j\in\mathcal N(i)\cup\{i\}}
\alpha_{ij}^{(l)} W^{(l)} h_j^{(l)}.
\end{equation}

Two standard instances are GAT and GATv2.

\paragraph{GAT}
Following the original formulation, GAT computes
\begin{equation}
e_{ij}^{(l)}
=
\LeakyReLU\!\left(
(a^{(l)})^\top
\big[
W^{(l)} h_i^{(l)} \,\Vert\, W^{(l)} h_j^{(l)}
\big]
\right),
\end{equation}
where $W^{(l)}$ is a learnable feature projection matrix and $a^{(l)}$ is a learnable attention vector.

\paragraph{GATv2}
Following GATv2, the attention logit is computed as
\begin{equation}
e_{ij}^{(l)}
=
(a^{(l)})^\top
\LeakyReLU\!\left(
W^{(l)}
\big[
h_i^{(l)} \,\Vert\, h_j^{(l)}
\big]
\right).
\end{equation} 
Unlike in GAT, here $W^{(l)}$ acts on the concatenated node pair $[h_i^{(l)} \Vert h_j^{(l)}]$ rather than separately projecting the two node features before concatenation. Compared with GAT, GATv2 yields a more flexible query-dependent ranking of neighbors.

\subsection{Learnable Temperature for Graph Attention}

We introduce a learnable temperature into graph attention. For layer $l$, let $T^{(l)}>0$ be a trainable scalar. The attention coefficient becomes
\begin{equation}
\alpha_{ij}^{(l)}
=
\frac{
\exp\!\left(e_{ij}^{(l)}/T^{(l)}\right)
}{
\sum_{r\in\mathcal N(i)\cup\{i\}}
\exp\!\left(e_{ir}^{(l)}/T^{(l)}\right)
}.
\end{equation}
The corresponding update is
\begin{equation}
h_i^{(l+1)}
=
\sum_{j\in\mathcal N(i)\cup\{i\}}
\alpha_{ij}^{(l)} W^{(l)} h_j^{(l)}.
\end{equation}

The temperature rescales attention logits before normalization. Smaller $T^{(l)}$ yields sharper attention, while larger $T^{(l)}$ produces smoother aggregation.

In implementation, positivity of $T^{(l)}$ is enforced by parameterizing it as $T^{(l)}=\operatorname{softplus}(\theta^{(l)})+\epsilon$, where $\theta^{(l)}$ is unconstrained and $\epsilon>0$ is a small constant.

\subsection{Gated Graph Attention}

We further introduce a node-dependent gate. For node $i$ at layer $l$, the gate is computed from the layer input $h_i^{(l)}$:
\begin{equation}
g_i^{(l)}
=
\operatorname{sigmoid}\!\left(
W_g^{(l)} h_i^{(l)} + b_g^{(l)}
\right),
\end{equation}
where $W_g^{(l)}$ and $b_g^{(l)}$ are learnable gate parameters, and $g_i^{(l)}$ has the same dimension as the projected feature $W^{(l)}h_i^{(l)}$. We consider two variants.

\paragraph{Post-Gate Attention}
The attention aggregation is computed first, and the gate is applied to the aggregated output:
\begin{equation}
h_i^{(l+1)}
=
\left(
\sum_{j\in\mathcal N(i)\cup\{i\}}
\alpha_{ij}^{(l)} W^{(l)} h_j^{(l)}
\right)
\odot g_i^{(l)},
\end{equation}
where $\odot$ denotes the element-wise (Hadamard) product.

\paragraph{Gate-First Attention}
The gate is first applied to the projected feature. The resulting gated feature is then used in both attention logit computation and message aggregation. Under the general attention form,
\begin{equation}
e_{ij}^{(l)}
=
\hat s^{(l)}\!\left(
W^{(l)} h_i^{(l)} \odot g_i^{(l)},
W^{(l)} h_j^{(l)} \odot g_j^{(l)}
\right),
\end{equation}
\begin{equation}
\alpha_{ij}^{(l)}
=
\frac{
\exp(e_{ij}^{(l)})
}{
\sum_{r\in\mathcal N(i)\cup\{i\}} \exp(e_{ir}^{(l)})
},
\end{equation}
and
\begin{equation}
h_i^{(l+1)}
=
\sum_{j\in\mathcal N(i)\cup\{i\}}
\alpha_{ij}^{(l)}
\left(
W^{(l)} h_j^{(l)} \odot g_j^{(l)}
\right).
\end{equation}
For specific operators such as GAT and GATv2, this amounts to replacing the projected feature in the original attention logit computation with its gated counterpart.

\subsection{Combining Temperature and Gating}

Temperature can be combined with both gating variants.

\paragraph{Temperature with Post-Gate Attention.}
We first compute temperature-scaled attention coefficients,
\begin{equation}
\alpha_{ij}^{(l)}
=
\frac{
\exp\!\left(
s^{(l)}(h_i^{(l)},h_j^{(l)}) / T^{(l)}
\right)
}{
\sum_{r\in\mathcal N(i)\cup\{i\}}
\exp\!\left(
s^{(l)}(h_i^{(l)},h_r^{(l)}) / T^{(l)}
\right)
},
\end{equation}
then gate the aggregated output:
\begin{equation}
h_i^{(l+1)}
=
\left(
\sum_{j\in\mathcal N(i)\cup\{i\}}
\alpha_{ij}^{(l)} W^{(l)} h_j^{(l)}
\right)
\odot g_i^{(l)}.
\end{equation}

\paragraph{Temperature with Gate-First Attention.}
We first gate the projected features, then compute temperature-scaled attention:
\begin{equation}
\alpha_{ij}^{(l)}
=
\frac{
\exp\!\left(
\hat s^{(l)}(
W^{(l)} h_i^{(l)} \odot g_i^{(l)},
W^{(l)} h_j^{(l)} \odot g_j^{(l)}
)/T^{(l)}
\right)
}{
\sum_{r\in\mathcal N(i)\cup\{i\}}
\exp\!\left(
\hat s^{(l)}(
W^{(l)} h_i^{(l)} \odot g_i^{(l)},
W^{(l)} h_r^{(l)} \odot g_r^{(l)}
)/T^{(l)}
\right)
},
\end{equation}
and aggregate gated messages:
\begin{equation}
h_i^{(l+1)}
=
\sum_{j\in\mathcal N(i)\cup\{i\}}
\alpha_{ij}^{(l)}
\left(
W^{(l)} h_j^{(l)} \odot g_j^{(l)}
\right).
\end{equation}

\subsection{Gate Regularization}

For gated variants, we optionally impose a sparsity regularizer on gate activations:
\begin{equation}
\mathcal L
=
\mathcal L_{\mathrm{task}}
+
\lambda_{\mathrm{gate}}
\sum_l \mathrm{mean}\!\left(g^{(l)}\right),
\end{equation}
where $\mathcal L_{\mathrm{task}}$ is the task loss, $\lambda_{\mathrm{gate}}$ is a regularization coefficient, and $\mathrm{mean}(g^{(l)})$ denotes the average gate value at layer $l$.

In experiments, we use the standard multi-head extension of graph attention. Unless otherwise stated, the learnable temperature is implemented as a layer-wise scalar shared across attention heads, and the gate is applied to the projected feature dimensions of each layer after head concatenation or to the corresponding projected message representation. Since temperature and gating do not change the neighborhood set or the attention normalization domain, we present the method above in its single-head form for clarity.

\begin{remark}[Lightweight and plug-in design]\label{remark:lightweight}
The proposed modifications are lightweight in both parameterization and implementation. Recall that $h_i^{(l)}\in\mathbb R^{d_l}$ and that the feature projection in a graph attention layer maps $h_i^{(l)}$ to $W^{(l)}h_i^{(l)}\in\mathbb R^{d_{l+1}}$, with $W^{(l)}\in\mathbb R^{d_{l+1}\times d_l}$. Learnable temperature adds only one scalar parameter $T^{(l)}$ to layer $l$. The gate computes $g_i^{(l)}\in\mathbb R^{d_{l+1}}$ by
\(
g_i^{(l)}
=
\operatorname{sigmoid}\!\left(
W_g^{(l)} h_i^{(l)} + b_g^{(l)}
\right),
\)
where $W_g^{(l)}\in\mathbb R^{d_{l+1}\times d_l}$ and $b_g^{(l)}\in\mathbb R^{d_{l+1}}$. Thus, a gated layer adds $d_{l+1}d_l+d_{l+1}$ parameters, comparable to one standard feature projection. More importantly, neither modification changes the neighborhood set, the sampling procedure, or the message-passing interface. Temperature only rescales existing attention logits before softmax, and gating is applied to projected features or aggregated messages. Therefore, the proposed variants can be inserted into existing graph attention layers without introducing additional edge-level modules or changing the underlying aggregation pipeline.
\end{remark}

\section{Experiments}\label{sec:experiments}
In the experimental section, we select eleven comparison methods. These include three baseline methods, namely GCN, GAT, and GATv2, as well as eight comparison variants constructed by combining the learnable temperature and gating mechanisms proposed in this paper in different ways. Details are shown in Table~\ref{tab:methods}.

\begin{table}[t]
\centering
\begin{tabular}{lcccc}
\toprule
Method & Backbone & Temperature & Gating & Combination Style \\
\midrule
GCN & GCN & \xmark & \xmark & -- \\
GAT & GAT & \xmark & \xmark & -- \\
GATv2 & GATv2 & \xmark & \xmark & -- \\
\midrule
Gated & GAT & \xmark & \cmark & Post-Gate \\
Temp\_only & GAT & \cmark & \xmark & -- \\
Temp\_gated & GAT & \cmark & \cmark & Temperature + Post-Gate \\
Gated\_temp & GAT & \cmark & \cmark & Gate-First + Temperature \\
\midrule
Gated\_v2 & GATv2 & \xmark & \cmark & Post-Gate \\
Temp\_only\_v2 & GATv2 & \cmark & \xmark & -- \\
Temp\_gated\_v2 & GATv2 & \cmark & \cmark & Temperature + Post-Gate \\
Gated\_temp\_v2 & GATv2 & \cmark & \cmark & Gate-First + Temperature \\
\bottomrule
\end{tabular}
\caption{The eleven compared methods used in experiments. The methods differ along three dimensions: whether learnable temperature is used, whether gating is used, and how temperature and gating are combined, with GAT and GATv2 as the two attention backbones.}
\label{tab:methods}
\end{table}

For the datasets, we use six representative homogeneous graph datasets and five representative heterogeneous graph datasets. For the heterogeneous graph datasets, we project type-specific node features into a shared space and evaluate all methods under a unified homogeneous message-passing protocol. For all datasets, we use the official train/validation/test splits provided by the corresponding benchmarks, ensuring that all compared methods are evaluated under the same data partition.

\subsection{Experiments on Homogeneous Benchmarks}



For homogeneous graph experiments, we evaluate all methods on six widely used node classification benchmarks: \texttt{Cora}, \texttt{Citeseer}, \texttt{Pubmed}, \texttt{OGBN-Arxiv}, \texttt{OGBN-Products}~\cite{hu2020open}, and \texttt{Reddit}. These benchmarks range from small citation networks to medium- and large-scale OGB/Reddit datasets, with detailed dataset statistics reported in~\ref{app:dataset-statistics}. The eleven compared methods are those listed in Table~\ref{tab:methods}.

All models are trained with the Adam optimizer and cross-entropy loss. Full-batch training is used for the citation datasets and \texttt{OGBN-Arxiv}, whereas mini-batch neighbor sampling is used for \texttt{OGBN-Products} and \texttt{Reddit}, with train/evaluation batch sizes of 2048/4096 and 4096/8192, respectively. The citation datasets, \texttt{OGBN-Arxiv}, and \texttt{OGBN-Products} are evaluated by accuracy, while \texttt{Reddit} is evaluated by Micro-F1. For OGB datasets, the official evaluator is used. For gated variants, we use the gate sparsity regularization term introduced in Sec.~\ref{sec:method}, with coefficient $10^{-5}$ for the small citation datasets and $10^{-6}$ for \texttt{OGBN-Arxiv}, \texttt{OGBN-Products}, and \texttt{Reddit}. For temperature-based variants, we set \texttt{init\_temp=1.0}; for gated variants, \texttt{gate\_bias\_init=0.0}. The detailed training and evaluation settings for non-GCN methods are summarized in Table~\ref{tab:homo_settings}, where the fanout denotes the number of sampled neighbors at each GNN layer.

For GCN, we adopt a separate set of configurations, as detailed below. On \texttt{Cora}, \texttt{Citeseer}, and \texttt{Pubmed}, it uses two layers with hidden dimensions of 64, 128, and 256, respectively, together with dropout 0.5, learning rate 0.01, and weight decay $5\times 10^{-4}$. On \texttt{OGBN-Arxiv} and \texttt{OGBN-Products}, it uses three layers with hidden dimension 256, dropout 0.5, and learning rate 0.01, with weight decay set to $10^{-3}$ and 0, respectively. On \texttt{Reddit}, it uses two layers with hidden dimension 256, dropout 0.5, and learning rate 0.01. We use 50 runs for \texttt{Cora}, \texttt{Citeseer}, and \texttt{Pubmed}, and 20 runs for \texttt{OGBN-Arxiv}, \texttt{OGBN-Products}, and \texttt{Reddit}. For each dataset, we report the mean and standard deviation over all runs in Table~\ref{tab:homo_selected_with_ogbn_products}, which serve as the basis for the following analysis.


\begin{table*}[t]
\centering
\setlength{\tabcolsep}{4pt}
\makebox[\textwidth][c]{%
\begin{tabular}{lccccc}
\toprule
Dataset & Training & Fanout & Epochs & Attention config. & Dropout / LR \\
\midrule
\texttt{Cora} & Full-batch & \textemdash & 400 & 2L, 8H, hidden 8 & 0.0 / 0.005 \\
\texttt{Citeseer} & Full-batch & \textemdash & 500 & 2L, 8H, hidden 8 & 0.5 / 0.005 \\
\texttt{Pubmed} & Full-batch & \textemdash & 500 & 2L, 8H, hidden 8 & 0.5 / 0.01 \\
\texttt{OGBN-Arxiv} & Full-batch & \textemdash & 1000 & 3L, 4H, hidden 64 & 0.3 / 0.005 \\
\texttt{OGBN-Products} & Mini-batch & $[15,10,5]$ & 50 & 3L, 4H, hidden 64 & 0.5 / 0.003 \\
\texttt{Reddit} & Mini-batch & $[25,10]$ & 500 & 2L, 4H, hidden 32 & 0.5 / 0.003 \\
\bottomrule
\end{tabular}
}
\caption{Training settings for non-GCN methods on homogeneous graph datasets. In the attention configuration column, L denotes layers and H denotes attention heads.}
\label{tab:homo_settings}
\end{table*}

\begin{table*}[t]
\centering
\renewcommand{\arraystretch}{1.12}
\setlength{\tabcolsep}{4pt}
\resizebox{\textwidth}{!}{%
\begin{tabular}{lcccccc}
\toprule
\multirow{2}{*}{Method} & \texttt{Cora} & \texttt{Citeseer} & \texttt{Pubmed} & \texttt{OGBN-Arxiv} & \texttt{OGBN-Products} & \texttt{Reddit} \\
 & Acc. (\%) & Acc. (\%) & Acc. (\%) & Acc. (\%) & Acc. (\%) & Micro-F1 (\%) \\
\midrule
GCN & 80.81 $\pm$ 0.76 & 69.24 $\pm$ 0.75 & 74.97 $\pm$ 1.08 & 71.06 $\pm$ 0.33 & 77.49 $\pm$ 0.17 & 51.37 $\pm$ 0.13 \\
GAT & 80.81 $\pm$ 1.07 & 68.61 $\pm$ 0.97 & 78.25 $\pm$ 0.32 & 71.39 $\pm$ 0.35 & 78.87 $\pm$ 0.28 & 51.80 $\pm$ 0.16 \\
GATv2 & 80.53 $\pm$ 1.00 & 68.35 $\pm$ 0.91 & 78.36 $\pm$ 0.44 & 71.59 $\pm$ 0.30 & 79.76 $\pm$ 0.33 & 52.15 $\pm$ 0.22 \\
Gated & 80.89 $\pm$ 1.14 & 69.21 $\pm$ 1.26 & 78.01 $\pm$ 0.51 & \underline{71.89 $\pm$ 0.34} & \textbf{80.62 $\pm$ 0.19} & 52.41 $\pm$ 0.20 \\
Temp\_only & \underline{81.35 $\pm$ 0.95} & 69.20 $\pm$ 0.91 & \underline{78.37 $\pm$ 0.46} & 71.51 $\pm$ 0.33 & 77.90 $\pm$ 0.28 & 51.70 $\pm$ 0.17 \\
Temp\_gated & \textbf{81.45 $\pm$ 1.05} & \underline{69.34 $\pm$ 1.13} & 77.63 $\pm$ 0.26 & 71.57 $\pm$ 0.41 & 79.08 $\pm$ 0.19 & 52.48 $\pm$ 0.34 \\
Gated\_temp & 81.34 $\pm$ 1.08 & \textbf{69.43 $\pm$ 1.00} & 77.72 $\pm$ 0.51 & 71.67 $\pm$ 0.22 & 78.95 $\pm$ 0.44 & \textbf{52.61 $\pm$ 0.23} \\
Gated\_v2 & 81.13 $\pm$ 1.12 & 69.25 $\pm$ 1.18 & 77.92 $\pm$ 0.42 & \textbf{72.04 $\pm$ 0.24} & \underline{80.09 $\pm$ 0.44} & 52.34 $\pm$ 0.24 \\
Temp\_only\_v2 & 80.91 $\pm$ 0.96 & 69.28 $\pm$ 1.06 & \textbf{78.60 $\pm$ 0.47} & 71.29 $\pm$ 0.25 & 78.87 $\pm$ 0.37 & 51.86 $\pm$ 0.29 \\
Temp\_gated\_v2 & 81.21 $\pm$ 1.10 & 69.27 $\pm$ 1.22 & 77.78 $\pm$ 0.49 & 71.74 $\pm$ 0.33 & 79.90 $\pm$ 0.22 & 52.30 $\pm$ 0.18 \\
Gated\_temp\_v2 & 81.21 $\pm$ 1.02 & 69.20 $\pm$ 1.14 & 77.96 $\pm$ 0.57 & 71.82 $\pm$ 0.19 & 79.71 $\pm$ 0.60 & \underline{52.58 $\pm$ 0.26} \\
\bottomrule
\end{tabular}
}
\caption{Performance comparison on selected homogeneous datasets. Values are test mean $\pm$ std (\%). Best and second-best results for each dataset are highlighted in bold and underlined, respectively. \texttt{Reddit} is evaluated by Micro-F1; other datasets use accuracy.}
\label{tab:homo_selected_with_ogbn_products}
\end{table*}

\begin{figure}[t]
  \centering
  \includegraphics[width=\linewidth]{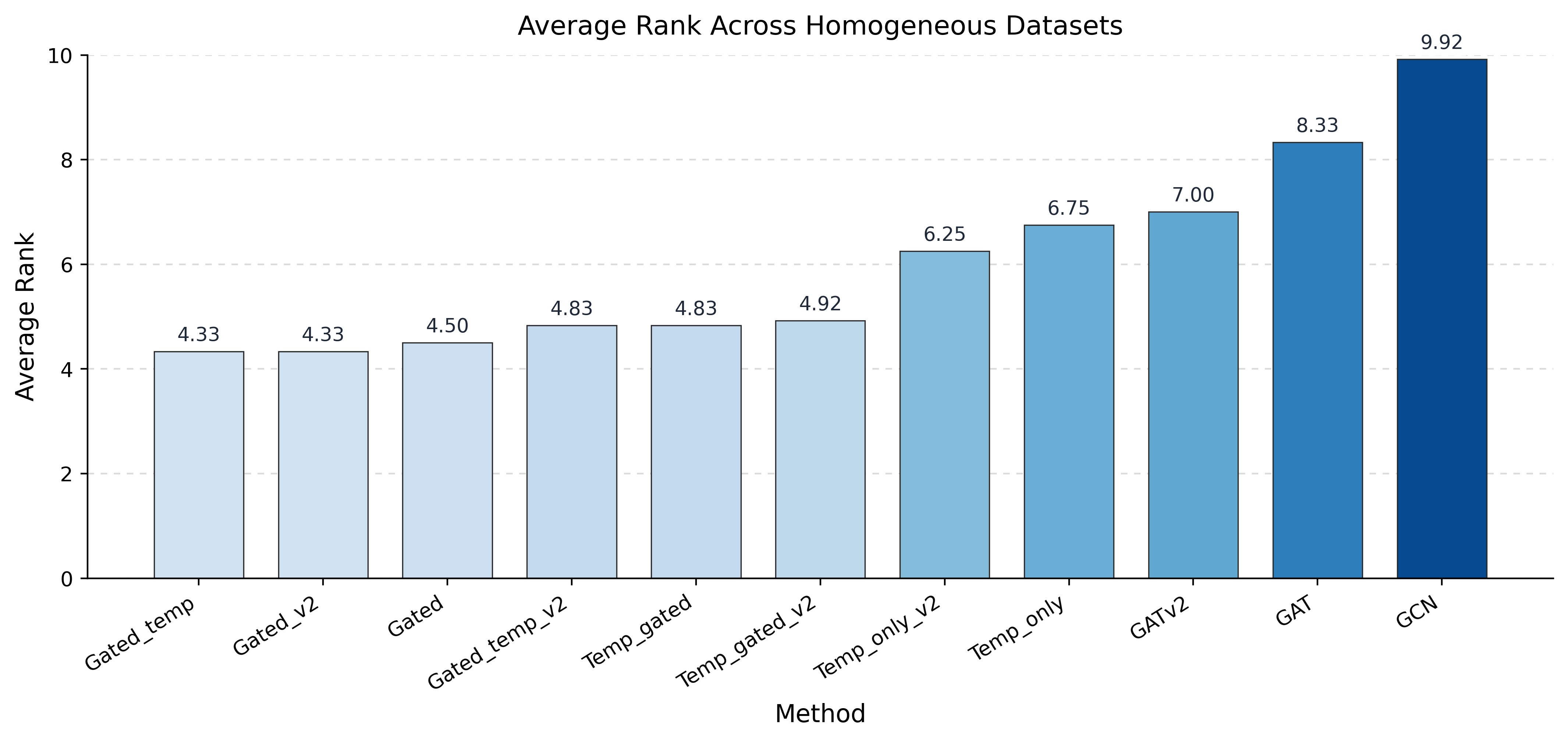}
  \caption{Average rank comparison across methods (lower is better).}
  \label{fig:average-rank-homo}
\end{figure}

\subsubsection{Do gating and learnable temperature improve graph attention individually?}
Table~\ref{tab:homo_selected_with_ogbn_products} shows that both mechanisms provide useful improvements, but their effects are not identical. The gating mechanism consistently strengthens attention-based models on several datasets. For example, compared with GAT, Gated improves the results on \texttt{Citeseer}, \texttt{OGBN-Arxiv}, \texttt{OGBN-Products}, and \texttt{Reddit}, with a particularly clear gain on \texttt{OGBN-Products}. A similar trend can be observed for the GATv2 backbone, where Gated\_v2 improves over GATv2 on \texttt{Cora}, \texttt{Citeseer}, \texttt{OGBN-Arxiv}, \texttt{OGBN-Products}, and \texttt{Reddit}. This indicates that the gate can help suppress less informative feature responses and improve the robustness of attention aggregation. Learnable temperature also brings clear benefits, especially on citation networks. \texttt{Temp\_only} improves over GAT on \texttt{Cora}, \texttt{Citeseer}, \texttt{Pubmed}, and \texttt{OGBN-Arxiv}, while \texttt{Temp\_only\_v2} obtains the best result on \texttt{Pubmed}. These results suggest that learnable temperature is useful for controlling the sharpness of attention coefficients, although its effect is more dataset-dependent than gating on large-scale graphs.

\subsubsection{Is combining gating and temperature more effective?}
The combined variants further demonstrate the complementarity between the two mechanisms. In Table~\ref{tab:homo_selected_with_ogbn_products}, Temp\_gated achieves the best performance on \texttt{Cora}, while Gated\_temp obtains the best results on \texttt{Citeseer} and \texttt{Reddit}. Thus, combined methods achieve the top score on three out of the six homogeneous datasets. On the remaining datasets, gate-only or temperature-only variants can be strongest, but the combined variants remain competitive and often improve over the corresponding plain attention backbones. The average-rank comparison in Fig.~\ref{fig:average-rank-homo} provides a more aggregated view: methods that combine gating and temperature tend to rank favorably across datasets, rather than only performing well on a single benchmark. This observation is consistent with the intended roles of the two mechanisms: gating modulates feature responses, while temperature controls the sharpness of attention coefficients. Their combination therefore provides a way to adjust both feature-level and attention-level behavior within the same attention layer.

\subsection{Experiments on Heterophilic Heterogeneous Benchmarks}
We further evaluate all methods on five datasets from the H2GB heterogeneous graph benchmark~\cite{lin_when_2025}: \texttt{Mag-year}, \texttt{Pokec}, \texttt{OAG-Chem}, \texttt{OGBN-MAG}, and \texttt{OAG-CS}. These datasets are originally heterogeneous graphs with multiple node and edge types. They cover academic graphs and social networks, and differ substantially in scale, feature dimensionality, label space, and heterophily strength. Detailed statistics, including the numbers of node types and edge types, are reported in~\ref{app:dataset-statistics}.

To ensure a unified comparison across all GNN backbones, we project type-specific node features into a shared hidden space and convert each sampled heterogeneous subgraph into a homogeneous message-passing graph. Therefore, these experiments should be understood as evaluations on heterogeneous benchmark data under a homogeneous message-passing protocol. The goal is not to evaluate type-aware heterogeneous graph modeling, but to examine whether the proposed gating and temperature mechanisms remain effective under mixed-type and heterophilic neighborhoods.

We use the H2Index from H2GB as a compact measure of heterophily, with larger values indicating stronger heterophilic structure. As summarized in~\ref{app:dataset-statistics}, all five datasets show clear heterophily, with \texttt{Mag-year}, \texttt{Pokec}, and \texttt{OAG-CS} being particularly heterophilic.

The eleven compared methods are the same as those listed in Table~\ref{tab:methods}. For node types without raw input features, we use learnable embeddings. The final linear classifier produces logits only for the seed nodes of the target node type.

All heterogeneous experiments use mini-batch training with neighbor sampling. We implement sampling with \texttt{NeighborLoader}, set \texttt{num\_neighbors=[20,10]} and \texttt{batch\_size=2048}, and train all models for 50 epochs. The loss function is multi-class cross entropy, and test accuracy is used as the evaluation metric for all five datasets. For gated variants, we use the gate sparsity regularization term introduced in Sec.~\ref{sec:method} with coefficient $10^{-6}$. For temperature-based variants, we set \texttt{init\_temp=1.0}; for gated variants, we set \texttt{gate\_bias\_init=0.0}. The detailed training settings for non-GCN methods are summarized in Table~\ref{tab:hetero_settings}, where the fanout denotes the number of sampled neighbors at each GNN layer.

For GCN, we adopt dataset-specific configurations. On \texttt{Mag-year} and \texttt{OGBN-MAG}, it uses four layers with hidden dimension 512, dropout 0.2, learning rate $10^{-3}$, and weight decay $10^{-5}$. On \texttt{Pokec}, it uses three layers with hidden dimension 256, dropout 0.5, learning rate $10^{-3}$, and weight decay $10^{-5}$. On \texttt{OAG-CS} and \texttt{OAG-Chem}, it uses three layers with hidden dimension 256, dropout 0.2, learning rate $5\times 10^{-3}$, and weight decay $10^{-5}$. We use 10 runs for all five heterogeneous datasets. Similarly, for each heterogeneous dataset, we report the mean and standard deviation over all runs in Table~\ref{tab:h2gb_selected_combined_testacc}, which are used to support the subsequent discussion.

\begin{table*}[t]
\centering
\setlength{\tabcolsep}{4pt}
\begin{tabular}{lcccc}
\toprule
Dataset & Fanout & Epochs & Attention config. & Dropout / LR \\
\midrule
\texttt{Mag-year} & $[20,10]$ & 50 & 4L, 8H, hidden 512 & 0.2 / $10^{-3}$ \\
\texttt{Pokec} & $[20,10]$ & 50 & 3L, 8H, hidden 256 & 0.5 / $10^{-3}$ \\
\texttt{OAG-Chem} & $[20,10]$ & 50 & 3L, 8H, hidden 256 & 0.2 / $5\times 10^{-3}$ \\
\texttt{OGBN-MAG} & $[20,10]$ & 50 & 4L, 8H, hidden 512 & 0.2 / $10^{-3}$ \\
\texttt{OAG-CS} & $[20,10]$ & 50 & 3L, 4H, hidden 256 & 0.2 / $10^{-3}$ \\
\bottomrule
\end{tabular}
\caption{Training settings for non-GCN methods on H2GB-derived graph datasets. In the attention configuration column, L denotes layers and H denotes attention heads.}
\label{tab:hetero_settings}
\end{table*}

\begin{table*}[t]
\centering
\renewcommand{\arraystretch}{1.12}
\setlength{\tabcolsep}{4pt}
\makebox[\textwidth][c]{%
\begin{tabular}{lccccc}
\toprule
Method & \texttt{Mag-year} (\%) & \texttt{Pokec} (\%) & \texttt{OAG-Chem} (\%) & \texttt{OGBN-MAG} (\%) & \texttt{OAG-CS} (\%) \\
\midrule
GCN & 36.32 $\pm$ 0.15 & 67.12 $\pm$ 0.09 & 15.83 $\pm$ 0.35 & 42.40 $\pm$ 0.21 & 12.17 $\pm$ 0.44 \\
GAT & 37.08 $\pm$ 0.10 & 71.13 $\pm$ 0.09 & 17.34 $\pm$ 0.18 & 50.12 $\pm$ 0.99 & 15.17 $\pm$ 0.34 \\
GATv2 & 37.96 $\pm$ 0.34 & 75.22 $\pm$ 0.08 & 17.42 $\pm$ 0.33 & \underline{51.82 $\pm$ 0.26} & 15.36 $\pm$ 0.38 \\
Gated & 37.46 $\pm$ 0.42 & 76.19 $\pm$ 0.06 & 14.38 $\pm$ 0.45 & 51.42 $\pm$ 0.16 & 15.58 $\pm$ 0.48 \\
Temp\_only & 36.43 $\pm$ 0.22 & 71.22 $\pm$ 0.13 & 17.45 $\pm$ 0.16 & \textbf{51.99 $\pm$ 0.29} & 14.88 $\pm$ 0.28 \\
Temp\_gated & 38.51 $\pm$ 0.10 & 76.18 $\pm$ 0.05 & \underline{17.95 $\pm$ 0.35} & 51.66 $\pm$ 0.15 & 15.58 $\pm$ 0.37 \\
Gated\_temp & \underline{38.53 $\pm$ 0.17} & 74.84 $\pm$ 0.12 & 17.12 $\pm$ 0.19 & 50.27 $\pm$ 0.45 & 14.30 $\pm$ 0.58 \\
Gated\_v2 & 38.38 $\pm$ 0.08 & \underline{76.21 $\pm$ 0.05} & 17.82 $\pm$ 0.35 & 51.59 $\pm$ 0.06 & \underline{15.59 $\pm$ 0.44} \\
Temp\_only\_v2 & 36.89 $\pm$ 0.34 & 73.02 $\pm$ 0.10 & 17.68 $\pm$ 0.21 & 50.97 $\pm$ 0.29 & 15.24 $\pm$ 0.37 \\
Temp\_gated\_v2 & 38.42 $\pm$ 0.06 & \textbf{76.23 $\pm$ 0.02} & \textbf{17.98 $\pm$ 0.22} & 51.61 $\pm$ 0.30 & \textbf{15.71 $\pm$ 0.41} \\
Gated\_temp\_v2 & \textbf{38.66 $\pm$ 0.28} & 75.03 $\pm$ 0.04 & 17.43 $\pm$ 0.23 & 50.96 $\pm$ 0.34 & 14.79 $\pm$ 0.51 \\
\bottomrule
\end{tabular}
}
\caption{Comparison on selected H2GB datasets using test accuracy (mean $\pm$ std, \%). Best and second-best results for each dataset are highlighted in bold and underlined, respectively.}
\label{tab:h2gb_selected_combined_testacc}
\end{table*}

\begin{figure}[t]
  \centering
  \includegraphics[width=\linewidth]{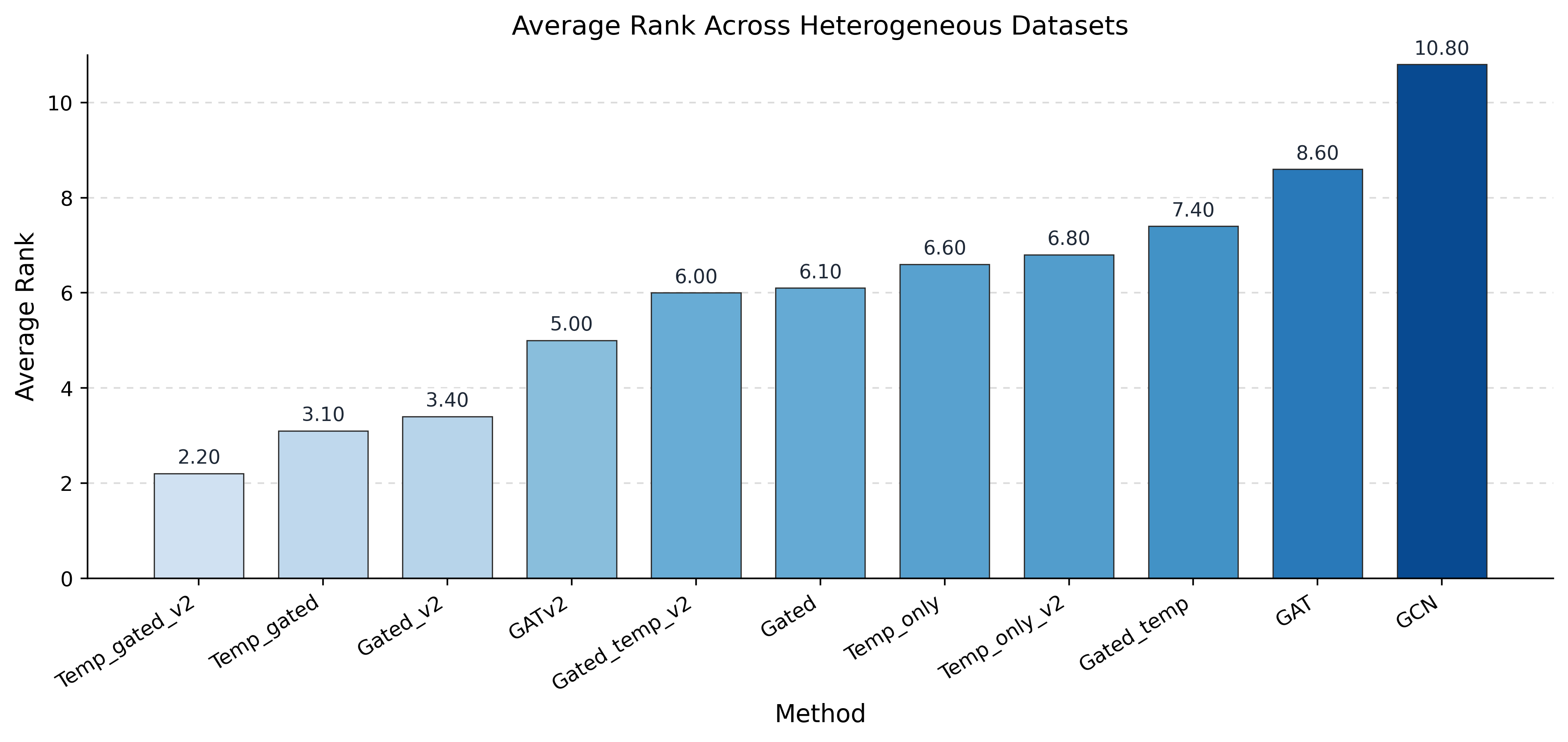}
  \caption{Average rank comparison across methods on heterogeneous graph datasets (lower is better).}
  \label{fig:average-rank-hetero}
\end{figure}

\subsubsection{Can the proposed mechanisms be applied to heterogeneous graph benchmark data?}
The selected H2GB datasets are originally heterogeneous graphs with multiple node and edge types, while Table~\ref{tab:hetero_settings} summarizes the unified homogeneous message-passing protocol used for all non-GCN attention-based methods. Under this protocol, Table~\ref{tab:h2gb_selected_combined_testacc} shows that the gated, temperature-based, and combined variants can be directly applied to heterogeneous benchmark data. Although the original datasets contain multiple node and edge types, the converted graph representation allows all compared GNN backbones to operate on the same message-passing structure. The competitive results across all five datasets, together with the average-rank comparison in Fig.~\ref{fig:average-rank-hetero}, indicate that the proposed attention modifications are not restricted to standard homogeneous benchmarks.

\subsubsection{Are the proposed mechanisms beneficial under strong heterophily?}

The H2Index values reported in~\ref{app:dataset-statistics} show that all five H2GB datasets used here have clear heterophilic structure. Under this setting, the proposed mechanisms bring more visible gains than on several homogeneous benchmarks. In Table~\ref{tab:h2gb_selected_combined_testacc}, the best variant improves over GAT on all five datasets and also improves over GATv2 on \texttt{Mag-year}, \texttt{Pokec}, \texttt{OAG-Chem}, and \texttt{OAG-CS}. The average-rank comparison in Fig.~\ref{fig:average-rank-hetero} provides an aggregated view and further shows that the combined or gated variants rank favorably across the heterophilic H2GB datasets. These results suggest that feature-level gating and learnable temperature are useful when neighborhoods contain mixed-type, semantically diverse, and potentially noisy messages. The gains are not always produced by the same variant, which indicates that gating and temperature address complementary aspects of heterophilic aggregation rather than acting as a single uniformly dominant mechanism.

\section{Supporting Theoretical Analysis}

Motivated by the empirical results in Sec.~\ref{sec:experiments}, this section provides a theoretical explanation for the roles of learnable temperature and gated graph attention under the attention framework introduced in Sec.~\ref{sec:method}. The goal is not to fully model type-aware heterogeneous graphs or strict heterophily, but to isolate two factors that commonly arise when neighborhoods become mixed and noisy: weak structural label signal and unreliable feature coordinates.

We analyze a single attention layer and write $h_i := h_i^{(l)}$ for the input representation of node $i$ at that layer. To obtain a tractable model, we consider a two-class CSBM~\cite{deshpande2018contextual} and specialize the attention logit to a GATv2-compatible form that exactly corresponds to a weighted $\ell_1$ distance. We consider two types of feature noise: global Gaussian noise and coordinate-missing noise. The analysis suggests that learnable temperature can mitigate the effect of global Gaussian noise, while gated graph attention can reduce the influence of coordinate-missing noise.

For technical simplicity, the analysis considers aggregation over $\mathcal N(i)$ without the self-loop. This simplification isolates neighbor aggregation and avoids additional dependence introduced by the deterministic self-node term.

\subsection{Problem setting}

\subsubsection{Contextual stochastic block model.}
Let $G=(V,E)$ be a two-class contextual stochastic block model. Each node $i\in V$ is associated with a label
\[
y_i \in \{-1,+1\},
\qquad
\mathbb P(y_i=+1)=\mathbb P(y_i=-1)=\tfrac12.
\]
Edges are generated according to
\[
\mathbb P\big((i,j)\in E \mid y_i=y_j\big)=\frac{a}{n},
\qquad
\mathbb P\big((i,j)\in E \mid y_i\neq y_j\big)=\frac{b}{n},
\qquad a>b>0.
\]
For a target node $i$, let $\mathcal N(i)$ denote its aggregation set, and define
\[
K:=|\mathcal N(i)|.
\]
Conditioned on $y_i$, the label of a random node $j\in\mathcal N(i)$ satisfies
\[
\mathbb E[y_j\mid y_i,\, j\in\mathcal N(i)] = m\, y_i,
\qquad
m:=\frac{a-b}{a+b}>0.
\]
Each node has a class-dependent node feature
\[
h_i^\star = y_i \mu \in \mathbb R^d,
\]
where $\mu\in\mathbb R^d$ is the class signal vector. The condition $a>b$ keeps a positive structural label signal so that the direction of class information is unambiguous. The strength of this signal is controlled by $m$: as $a$ approaches $b$, $m$ approaches zero, the homophily signal weakens, and the neighborhood becomes increasingly mixed.

\subsubsection{A GATv2-compatible weighted $\ell_1$ logit.}
Following the GATv2 scoring form in Sec.~\ref{sec:method}, we specialize the attention logit to a coordinate-separable distance-based form:
\[
e_{ij}
=
-\sum_{\ell=1}^d w_\ell |h_{i\ell}-h_{j\ell}|
=: -\|h_i-h_j\|_{1,w},
\qquad w_\ell>0.
\]
This distance-based logit assigns larger values to neighbors whose representations are closer to the target node representation. It provides a simple way to formalize the graph-attention intuition that local aggregation can benefit from emphasizing representation-similar neighbors and down-weighting less similar or noisier ones.

This logit is a special case of the GATv2 attention logit
\[
e_{ij}
=
q^\top
\LeakyReLU_\beta\!\left(
W [h_i \Vert h_j]
\right),
\]
where $\LeakyReLU_\beta$ has negative slope $\beta\in(0,1)$. Let $D_w=\mathrm{diag}(w_1,\ldots,w_d)$ and choose
\[
W
=
\begin{bmatrix}
D_w & -D_w \\
-D_w & D_w
\end{bmatrix},
\qquad
q
=
-\frac{1}{1-\beta}\mathbf 1_{2d}.
\]
Then
\[
W[h_i\Vert h_j]
=
\begin{bmatrix}
D_w(h_i-h_j)\\
-D_w(h_i-h_j)
\end{bmatrix}.
\]
Using
\[
\LeakyReLU_\beta(t)+\LeakyReLU_\beta(-t)=(1-\beta)|t|,
\]
we obtain
\[
q^\top
\LeakyReLU_\beta\!\left(
W [h_i \Vert h_j]
\right)
=
-\sum_{\ell=1}^d w_\ell |h_{i\ell}-h_{j\ell}|.
\]
Thus the weighted $\ell_1$ logit is exactly GATv2-compatible.

\subsubsection{Node-level metric.}
To quantify how well the attention layer preserves class information for node $i$, we consider the attention-weighted neighbor-label score
\[
R_i^{(T)}
:=
\sum_{j\in\mathcal N(i)} \alpha_{ij}^{(T)} y_j.
\]
For gated graph attention, we analogously define
\[
\hat R_i^{(T)}
:=
\sum_{j\in\mathcal N(i)} \hat\alpha_{ij}^{(T)} y_j.
\]
We then measure class separability by the node-level signal-to-noise ratio
\[
\mathrm{SNR}(T)
:=
\frac{
\mathbb E[R_i^{(T)}\mid y_i=+1]
-
\mathbb E[R_i^{(T)}\mid y_i=-1]
}{
\sqrt{
\mathrm{Var}(R_i^{(T)}\mid y_i=+1)
+
\mathrm{Var}(R_i^{(T)}\mid y_i=-1)
}}.
\]
The gated variant is defined by
\[
\widehat{\mathrm{SNR}}(T)
:=
\frac{
\mathbb E[\hat R_i^{(T)}\mid y_i=+1]
-
\mathbb E[\hat R_i^{(T)}\mid y_i=-1]
}{
\sqrt{
\mathrm{Var}(\hat R_i^{(T)}\mid y_i=+1)
+
\mathrm{Var}(\hat R_i^{(T)}\mid y_i=-1)
}}.
\]

\subsubsection{Noise settings.}
We consider the following two types of feature noise.

\smallskip
\noindent
\textbf{Global Gaussian Noise.}
In the first setting, all feature coordinates are perturbed by isotropic Gaussian noise:
\[
h_i = y_i\mu + \sigma \varepsilon_i,
\qquad
\varepsilon_i\sim \mathcal N(0,I_d),
\qquad
\sigma>0.
\]
Here $\sigma$ controls the global noise level. This setting captures the case where the node representation is corrupted in all coordinates by label-independent noise.

\smallskip
\noindent
\textbf{Coordinate-missing Noise.}
In the second setting, each coordinate is independently either observed or replaced by a noisy uninformative value:
\[
h_{i\ell}
=
r_{i\ell}\, y_i\mu_\ell + (1-r_{i\ell})\xi_{i\ell},
\]
where
\[
r_{i\ell}\sim \mathrm{Bernoulli}(1-\rho),
\qquad
\xi_{i\ell}\sim \mathcal N(0,\tau^2).
\]
All variables are independent across nodes and coordinates. Here $\rho$ controls the missing-coordinate probability, and $\tau$ controls the noise level of the fill-in values.

\subsection{Learnable temperature under Global Gaussian Noise}

\begin{theorem}
\label{thm:temp_gaussian}
Consider the CSBM defined above and focus on target nodes with
\[
K=|\mathcal N(i)|>1.
\]
Under the global Gaussian noise
\[
h_i = y_i\mu + \sigma \varepsilon_i,
\qquad
\varepsilon_i\sim \mathcal N(0,I_d),
\qquad
\sigma>0,
\]
with the weighted $\ell_1$ GATv2-compatible attention logit
\[
e_{ij}
=
-\sum_{\ell=1}^d w_\ell |h_{i\ell}-h_{j\ell}|,
\]
let the fixed-temperature baseline correspond to the choice $T=1$, and let the mechanism with learnable temperature choose $T>0$. Then
\[
\sup_{T>0}\mathrm{SNR}(T)\ge \mathrm{SNR}(1).
\]
Moreover, as $\sigma\to\infty$, the attention logits become asymptotically uninformative about neighbor labels. In this high-noise limit, choosing a temperature sufficiently large relative to the logit scale reduces attention-concentration variance while preserving the leading signal, and yields
\[
\mathrm{SNR}(T)>\mathrm{SNR}(1).
\]
\end{theorem}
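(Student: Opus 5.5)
The first claim is immediate, and the work goes into the high-noise limit. Since $T=1$ belongs to the feasible set $\{T>0\}$, we get $\sup_{T>0}\mathrm{SNR}(T)\ge \mathrm{SNR}(1)$ at once. This requires only that $\mathrm{SNR}(1)$ be well defined, i.e.\ the conditional variances are finite and not both zero. That holds because $R_i^{(T)}\in[-1,1]$ and, for $K>1$, neighbor labels are nondegenerate.

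For the second claim I will first reduce $R_i^{(T)}$ to a conditional computation given the neighbor labels. Fix $y_i$ and $K$, and let $S=\sum_{j} y_j$ over $\mathcal N(i)$. By the symmetry $y\mapsto -y$, $\mu\mapsto-\mu$, $\varepsilon\mapsto-\varepsilon$, the two conditional laws of $R_i^{(T)}$ are mirror images. Hence
\[
\mathrm{SNR}(T)=\frac{2\,\mathbb E[R_i^{(T)}\mid y_i=+1]}{\sqrt{2\,\mathrm{Var}(R_i^{(T)}\mid y_i=+1)}}.
\]
Next I will analyze the limit $T\to\infty$ for fixed $\sigma$. Then $\alpha^{(T)}_{ij}\to 1/K$ uniformly, since the logits are a.s.\ finite, and $R_i^{(\infty)}=\bar y:=S/K$. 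This gives $\mathbb E[\bar y\mid y_i=+1]=m$ and $\mathrm{Var}(\bar y\mid y_i=+1)=(1-m^2)/K$. By dominated convergence, $\mathrm{SNR}(T)\to \mathrm{SNR}_{\rm mean}:=\sqrt{2}\,m\sqrt{K}/\sqrt{1-m^2}$. This is the GCN-style baseline.

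The remaining step, and the main obstacle, is to show that at $T=1$ and large $\sigma$ the SNR is strictly below $\mathrm{SNR}_{\rm mean}$. Write $e_{ij}=-\sigma\|\varepsilon_i-\varepsilon_j+(y_i-y_j)\mu/\sigma\|_{1,w}$. As $\sigma\to\infty$, the label-dependent shift $(y_i-y_j)\mu/\sigma$ vanishes, so the softmax weights at $T=1$ converge to the weights built from $-\sigma\|\varepsilon_i-\varepsilon_j\|_{1,w}$. These are independent of the labels, which is the sense in which the logits become uninformative. Moreover, because the spacing between distinct neighbors' logits grows linearly in $\sigma$, the softmax concentrates on the single nearest neighbor $j^\ast=\arg\min_j\|\varepsilon_i-\varepsilon_j\|_{1,w}$. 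This argmin is a.s.\ unique and uniform over $\mathcal N(i)$, independent of the labels. Hence $R_i^{(1)}\to y_{j^\ast}$ in distribution, with conditional mean $m$ (given $y_i=+1$) and conditional variance $1-m^2$. Since everything is bounded, the moments converge, and
\[
\lim_{\sigma\to\infty}\mathrm{SNR}(1)=\frac{\sqrt2\, m}{\sqrt{1-m^2}}<\mathrm{SNR}_{\rm mean}\quad(K>1).
\]
To make this rigorous I must check two things: that the leading signal $\mathbb E[R]$ stays at $m$, which holds because the weights are asymptotically label-independent and the neighbor labels are exchangeable; and that the gap between the smallest and second-smallest distances is positive a.s., so the concentration is genuine. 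Both follow from the continuity of the Gaussian distribution.

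Finally, I will combine the two steps through the scale invariance of the logits. The logits scale like $\sigma$, so set $T=c\sigma$. With the label-dependent shift negligible, the weights become a softmax of $-\|\varepsilon_i-\varepsilon_j\|_{1,w}/c$, which is label-independent and close to uniform for large $c$. The variance is then about $(1-m^2)\sum_j\alpha_j^2$, whose expectation approaches $1/K$, while the mean remains $m$. Hence for large $\sigma$ one can pick $T$ large relative to the logit scale with $\mathrm{SNR}(T)$ near $\mathrm{SNR}_{\rm mean}$. For $\sigma$ large enough, $\mathrm{SNR}(1)$ is near $\sqrt2 m/\sqrt{1-m^2}$, so a strict gap of factor about $\sqrt K$ remains, and $\mathrm{SNR}(T)>\mathrm{SNR}(1)$.
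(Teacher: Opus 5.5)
Your argument is correct. It shares the paper's skeleton: $T=1$ is feasible for the first claim; in the high-noise limit the weights become label-independent, so the mean stays at $m$ and the variance is governed by $(1-m^2)\sum_j\alpha_{ij}^2$; and a large $T$ drives this toward the uniform value $1/K$. Where you differ is the decisive comparison step.

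The paper uses $\sum_j u_j^2\ge 1/K$, with equality only at the uniform allocation, plus the fact that the $T=1$ logits are a.s.\ distinct. This gives $\mathbb E\sum_j(\alpha^{(1)}_{ij})^2>1/K$ at each fixed $\sigma$. By itself it does not show the gap stays bounded away from zero as $\sigma\to\infty$, which is what is needed for it to beat the $o(1)$ errors in the paper's variance expansion.

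You instead compute both endpoints explicitly. Since $|e_{ij}+\sigma\|\varepsilon_i-\varepsilon_j\|_{1,w}|\le 2\|\mu\|_{1,w}$ while logit spacings grow like $\sigma$, the $T=1$ softmax collapses onto the noise-space nearest neighbour. Hence $\sum_j(\alpha^{(1)}_{ij})^2\to 1$ and $\mathrm{SNR}(1)\to\sqrt2\,m/\sqrt{1-m^2}$, whereas $T\to\infty$ yields the mean aggregator with $\sqrt{2K}\,m/\sqrt{1-m^2}$. This gives an explicit, non-vanishing $\sqrt K$ gap and shows that fixed-temperature attention is in fact maximally concentrated in this regime. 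That makes the strict inequality rigorous rather than heuristic.

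Two small remarks. First, your fixed-$\sigma$ limit $T\to\infty$ already gives $\sup_T\mathrm{SNR}(T)\ge\mathrm{SNR}_{\mathrm{mean}}$ for every $\sigma$. The final $T=c\sigma$ double limit is therefore only needed to match the theorem's ``large relative to the logit scale'' wording. Second, the reflection you want is $(y,\varepsilon)\mapsto(-y,-\varepsilon)$ with $\mu$ held fixed, which sends every $h_i$ to $-h_i$ and leaves all $\ell_1$ distances unchanged. Flipping $\mu$ as well is not a symmetry of a fixed model, though this step is inessential anyway.
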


\begin{proof}
See~\ref{app:proofs-temp} for the detailed proof.
\end{proof}

This theorem explains the role of learnable temperature as controlling attention sharpness under global Gaussian noise. When the attention logits become dominated by label-independent noise, fixed-temperature attention may assign overly concentrated coefficients to noisy neighbors. A sufficiently large learned temperature smooths the attention allocation, reduces the variance of the attention-weighted label signal, and therefore improves the node-level SNR.

\subsection{Gated graph attention under Coordinate-missing Noise}
We use the weak-logit regime to analyze the first-order behavior of the softmax attention coefficients. This regime is common when attention logits are moderate or softened by temperature, and it allows us to isolate how gating affects the leading signal and fluctuation terms.

\begin{theorem}
\label{thm:gate_missing}
Consider the coordinate-missing noise
\[
h_{i\ell}
=
r_{i\ell}y_i\mu_\ell
+
(1-r_{i\ell})\xi_{i\ell},
\qquad
r_{i\ell}\sim \mathrm{Bernoulli}(1-\rho),
\qquad
\xi_{i\ell}\sim \mathcal N(0,\tau^2),
\]
with the weighted $\ell_1$ GATv2-compatible attention logit
\[
e_{ij}
=
-\sum_{\ell=1}^d w_\ell |h_{i\ell}-h_{j\ell}|.
\]
Assume the weak-logit regime
\[
\max_{j\in\mathcal N(i)} |e_{ij}| = o_p(T),
\qquad
\max_{j\in\mathcal N(i)} |\hat e_{ij}| = o_p(T).
\]
Then there exists an oracle gate
\[
g_{i\ell}=r_{i\ell},
\qquad
\hat h_{i\ell}=g_{i\ell}h_{i\ell}=r_{i\ell}y_i\mu_\ell,
\]
with oracle-gated graph attention logit
\[
\hat e_{ij}
=
-\sum_{\ell=1}^d w_\ell |\hat h_{i\ell}-\hat h_{j\ell}|,
\]
such that the oracle-gated logit preserves the class-separation logit gap of the ungated logit while changing the leading logit variance from order $\tau^2$ to a quantity bounded independently of $\tau$. Consequently, under the weak-logit first-order approximation, oracle gating improves the node-level SNR for sufficiently large $\tau$:
\[
\widehat{\mathrm{SNR}}(T)
>
\mathrm{SNR}(T).
\]
\end{theorem}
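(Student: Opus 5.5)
The plan is to linearize the softmax in the weak-logit regime, so that both $\mathrm{SNR}(T)$ and $\widehat{\mathrm{SNR}}(T)$ become explicit functionals of the first two moments of the logits. Then compare those moments coordinate by coordinate, using the separable form of $e_{ij}$.

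\textbf{Step 1 (first-order expansion).} Under $\max_j|e_{ij}|=o_p(T)$, write $\alpha_{ij}^{(T)}=\frac1K\bigl(1+(e_{ij}-\bar e_i)/T\bigr)+o_p(1/T)$, where $\bar e_i=\frac1K\sum_{r}e_{ir}$. This gives
\[
R_i^{(T)}\approx \bar y_i+\frac{1}{KT}\sum_{j\in\mathcal N(i)}(e_{ij}-\bar e_i)\,y_j ,
\]
with $\bar y_i=\frac1K\sum_j y_j$. The correction term is a centered covariance between the logits and the neighbor labels. Condition on $y_i$ and on the neighbor labels. The mean gap of the correction is governed by the logit gap
\[
\Delta:=\mathbb E[e_{ij}\mid y_j=y_i]-\mathbb E[e_{ij}\mid y_j\ne y_i].
\]
Its variance is governed by $\mathrm{Var}(e_{ij})$ together with the covariance induced by the shared target $h_i$. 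The first-order SNR is then a ratio whose numerator is of the form $2m+c_1\Delta/T$ and whose denominator has the form $\sqrt{v_0+c_2\mathrm{Var}(e)/T^2+\dots}$. The same formula holds for the gated variant with $\hat\Delta$ and $\mathrm{Var}(\hat e)$ in place of $\Delta$ and $\mathrm{Var}(e)$.

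\textbf{Step 2 (per-coordinate moments).} Take one coordinate $\ell$ and compute $D_\ell=|h_{i\ell}-h_{j\ell}|$ by cases on $(r_{i\ell},r_{j\ell})$.

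\emph{Ungated logit.}
\begin{itemize}
\item Both observed (probability $(1-\rho)^2$): $D_\ell=0$ for a same-class pair and $2|\mu_\ell|$ for an opposite-class pair.
\item Exactly one missing: $D_\ell=|y\mu_\ell-\xi|$, whose law does not depend on the class relation because $\xi$ is symmetric.
\item Both missing: $D_\ell=|\xi-\xi'|$, again independent of the class relation.
\end{itemize}
Hence the class gap is $\Delta=2(1-\rho)^2\sum_\ell w_\ell|\mu_\ell|$.

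\emph{Oracle-gated logit.} Here $\hat h_{i\ell}=r_{i\ell}y_i\mu_\ell$. The case split is the same, except that the missing cases give $|\mu_\ell|$ and $0$, and these are again class-independent. So $\hat\Delta=\Delta$, which shows the gap is preserved.

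\emph{Variances.} The one- and two-missing ungated cases contribute a variance of order $\tau^2$ (for example $\mathrm{Var}|\xi-\xi'|=2\tau^2(1-2/\pi)$). So $\mathrm{Var}(e_{ij})\ge c\,\rho\,\tau^2\sum_\ell w_\ell^2$. In contrast, $\hat e_{ij}$ is a weighted sum of bounded variables taking values in $\{0,|\mu_\ell|,2|\mu_\ell|\}$, so its variance is at most $4\sum_\ell w_\ell^2\mu_\ell^2$, independent of $\tau$. Apply the same bookkeeping to the covariance between $e_{ij}$ and $e_{ij'}$ through the shared $h_i$.

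\textbf{Step 3 (comparison).} Insert these moments into the first-order SNR from Step 1. The numerators agree. The ungated denominator contains a term of order $\tau^2/T^2$, while every gated term is bounded in $\tau$. Therefore, for $\tau$ large enough, with $T$ held fixed, $\widehat{\mathrm{SNR}}(T)>\mathrm{SNR}(T)$.

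A subtlety must be addressed explicitly. The weak-logit assumption for the ungated logit requires $\tau=o(T)$ in the expansion. So the statement should be read as a comparison of the first-order approximations: treat the $\tau^2/T^2$ term as the leading variance term and let $\tau$ grow within the first-order formula. Equivalently, let $T$ scale with $\tau$ so that $\tau/T\to\kappa>0$ is small but nonzero.

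\textbf{Main obstacle.} I expect the hardest step to be the variance of the correction term in Step 1. The centering by $\bar e_i$ and the shared target feature $h_i$ make the summands dependent. What must be shown is that the covariance terms do not cancel the $\tau^2$ contribution, i.e.\ that $\mathrm{Var}(e_{ij}-\bar e_i\mid h_i)$ still retains a positive multiple of $\rho\tau^2$. I would first condition on $h_i$, so that the $e_{ij}$ become i.i.d.\ given the labels. I would then lower-bound the conditional variance using only the both-missing and neighbor-missing cases, whose randomness comes from $\xi_{j\ell}$ and is independent across $j$. Finally, the law of total variance gives the $\tau^2$ lower bound.
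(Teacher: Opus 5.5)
Your proposal is correct and follows the paper's proof. Both use the same case split on $(r_{i\ell},r_{j\ell})$ giving $\hat\Delta=\Delta=2(1-\rho)^2\sum_\ell w_\ell|\mu_\ell|$, the same contrast between an $\Omega(\tau^2)$ ungated logit variance and the $\tau$-free bound $4\sum_\ell w_\ell^2\mu_\ell^2$, and the same first-order softmax expansion; your $\rho\tau^2$ lower bound versus the paper's $\rho^2\tau^2$ is immaterial. Your explicit first-order SNR, conditioning on $h_i$ for the shared-target covariances, and the $\tau=o(T)$ remark are more careful than the paper, though note that strict inequality needs $K>1$: for $K=1$ the contrast $\sum_j(y_j-\bar y_i)^2$ vanishes and the two SNRs coincide.
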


\begin{proof}
See~\ref{app:proofs-gate} for the detailed proof.
\end{proof}

This theorem explains the role of gated graph attention under coordinate-missing noise. In the ideal oracle case, the gate removes unreliable noisy coordinates from the attention logit while preserving the leading class-separation logit gap. Under the weak-logit approximation, this reduces logit fluctuations and improves the first-order node-level SNR.

\subsection{Discussion}

The two noise settings studied above correspond to two common sources of difficulty in graph representation learning. Global Gaussian noise models the case where node features have weak class separability, which can arise when neighborhoods are mixed and the feature signal is not strongly aligned with the label structure. Coordinate-missing noise models the case where node features are high-dimensional but only a sparse subset of coordinates is informative for the downstream task, while other coordinates may be unreliable or noisy.

Theorem~\ref{thm:temp_gaussian} shows that, under global Gaussian noise, learnable temperature can improve the node-level SNR by controlling the sharpness of the attention distribution. When the attention logits are affected by label-independent noise, a fixed-temperature attention mechanism may become overly concentrated on noisy neighbors, whereas a learnable temperature can smooth the attention allocation and reduce noise-driven concentration. This provides a theoretical explanation for why temperature-based variants are effective when node features are less separable, which is consistent with the improvements observed on the heterophilic heterogeneous benchmarks.

Theorem~\ref{thm:gate_missing} addresses a different source of difficulty. Under coordinate-missing noise, the oracle case of gated graph attention can preserve the class-separation logit gap while removing the $\tau^2$-order logit fluctuation caused by noisy fill-in coordinates. This result suggests that gating is useful when only part of the feature dimensions is reliable or task-relevant. It therefore supports the empirical effectiveness of gated variants, which can suppress unreliable feature responses before or after attention aggregation.

More importantly, the two types of noise often coexist in real graph data. Node features can be both weakly separable and sparse in their task-relevant dimensions. In such cases, learnable temperature and gated graph attention address different failure modes: temperature controls attention-level sharpness under noisy attention logits, while gating controls feature-level reliability under sparse or unreliable coordinates. Their combination is therefore not redundant; instead, it provides complementary mechanisms for handling different types of noise in the data. This interpretation is consistent with the experimental observation that combined variants often achieve the strongest or most competitive performance across benchmarks.

\section{Conclusion}

In this paper, we proposed two simple and effective modifications to graph attention: learnable temperature for controlling the sharpness of attention coefficients, and gated graph attention for modulating feature updates or projected messages. These mechanisms can be applied to both GAT and GATv2 backbones, and can also be combined in a unified attention framework. Experiments on homogeneous benchmarks and heterophilic heterogeneous benchmarks show that the proposed variants improve the performance with the corresponding baselines across multiple datasets. We further provided a supporting theoretical analysis showing that learnable temperature is beneficial under global Gaussian noise, while gated graph attention helps under coordinate-missing noise, offering a mechanistic explanation for their empirical behavior.

There are several promising directions for future work. First, the proposed mechanisms can be extended to Graph Transformer architectures, where global or long-range attention may benefit from adaptive temperature control and feature-level gating. Second, the theoretical analysis can be further developed for type-aware heterogeneous graphs and more general heterophilic settings, so that the interaction between node types, relation types, feature noise, and attention modulation can be characterized more directly.

 \bibliographystyle{plain} 
 \bibliography{reference}

\appendix

\section{Dataset Statistics}
\label{app:dataset-statistics}

\begin{table*}[htbp]
\centering
\setlength{\tabcolsep}{5pt}
\makebox[\textwidth][c]{%
\begin{tabular}{llrrrr}
\toprule
Dataset & Graph type & \#Nodes & \#Edges & \#Features & \#Classes \\
\midrule
\texttt{Cora} & Citation graph & 2,708 & 10,556 & 1,433 & 7 \\
\texttt{Citeseer} & Citation graph & 3,327 & 9,104 & 3,703 & 6 \\
\texttt{Pubmed} & Citation graph & 19,717 & 88,648 & 500 & 3 \\
\texttt{OGBN-Arxiv} & Citation graph & 169,343 & 1,166,243 & 128 & 40 \\
\texttt{OGBN-Products} & Amazon product graph & 2,449,029 & 61,859,140 & 100 & 47 \\
\texttt{Reddit} & Reddit post graph & 232,965 & 114,615,892 & 602 & 41 \\
\bottomrule
\end{tabular}
}
\caption{Statistics of homogeneous graph datasets.}
\label{tab:homo_dataset_stats}
\end{table*}


\begin{table*}[htbp]
\centering
\setlength{\tabcolsep}{5pt}
\makebox[\textwidth][c]{%
\begin{tabular}{llrrrrr}
\toprule
Dataset & Graph type & \#Nodes (types) & \#Edges (types) & \#Features & \#Classes & H2Index \\
\midrule
\texttt{Mag-year} & Academic graph & 1,939,743 (4) & 42,182,144 (7) & 128 & 5 & 0.9654 \\
\texttt{Pokec} & Social network & 1,731,977 (16) & 51,774,836 (31) & 66 & 2 & 0.9488 \\
\texttt{OAG-Chem} & Academic graph & 1,918,881 (4) & 38,098,014 (22) & 768 & 2,985 & 0.8858 \\
\texttt{OGBN-MAG} & Academic graph & 1,939,743 (4) & 42,182,144 (7) & 128 & 349 & 0.8773 \\
\texttt{OAG-CS} & Academic graph & 1,112,691 (4) & 27,537,448 (22) & 768 & 3,514 & 0.9652 \\
\bottomrule
\end{tabular}
}
\caption{Statistics of heterophilic heterogeneous benchmark datasets.}
\label{tab:hetero_dataset_stats}
\end{table*}

\section{Proofs of Theorems}
\label{app:proofs}

\subsection{Proof of Theorem~\ref{thm:temp_gaussian}}
\label{app:proofs-temp}
The first claim follows directly because the learnable-temperature family includes the fixed-temperature baseline $T=1$.

We next consider the limit $\sigma\to\infty$. For a neighbor $j\in\mathcal N(i)$,
\[
h_{i\ell}-h_{j\ell}
=
(y_i-y_j)\mu_\ell
+
\sigma(\varepsilon_{i\ell}-\varepsilon_{j\ell}).
\]
As $\sigma$ increases, the fixed class-dependent shift $(y_i-y_j)\mu_\ell$ is dominated by the Gaussian noise term. Therefore, the class-conditional distributions of the weighted $\ell_1$ logits become asymptotically identical, and the attention coefficients become asymptotically independent of the neighbor labels.

Conditioned on the attention coefficients, we have
\[
\mathbb E[R_i^{(T)}\mid \alpha^{(T)},y_i]
=
\sum_{j\in\mathcal N(i)}
\alpha_{ij}^{(T)}
\mathbb E[y_j\mid y_i]
=
m y_i,
\]
and
\[
\mathrm{Var}(R_i^{(T)}\mid \alpha^{(T)},y_i)
=
(1-m^2)
\sum_{j\in\mathcal N(i)}
\big(\alpha_{ij}^{(T)}\big)^2.
\]
Taking expectation over the attention coefficients gives
\[
\mathrm{Var}(R_i^{(T)}\mid y_i)
=
(1-m^2)
\mathbb E\!\left[
\sum_{j\in\mathcal N(i)}
\big(\alpha_{ij}^{(T)}\big)^2
\right]
+o(1).
\]

Thus, in the high-noise limit, the leading numerator of $\mathrm{SNR}(T)$ is independent of $T$, while the denominator is controlled by the attention concentration
\[
\mathbb E\!\left[
\sum_{j\in\mathcal N(i)}
\big(\alpha_{ij}^{(T)}\big)^2
\right].
\]
For any convex allocation $(u_1,\ldots,u_K)$,
\[
\sum_{j=1}^K u_j^2\ge \frac{1}{K},
\]
with equality if and only if $u_j=1/K$ for all $j$. Hence the variance is minimized by the uniform allocation.

Under the continuous Gaussian noise and $K>1$, the fixed-temperature logits at $T=1$ are unequal almost surely, so the corresponding attention allocation is non-uniform almost surely. Choosing a temperature sufficiently large relative to the logit scale makes the softmax allocation approach the uniform allocation, thereby reducing the variance term while preserving the leading signal term. This yields
\[
\mathrm{SNR}(T)>\mathrm{SNR}(1).
\]

\hfill $\square$\par

\subsection{Proof of Theorem~\ref{thm:gate_missing}}
\label{app:proofs-gate}

We prove the result for the oracle gate $g_{i\ell}=r_{i\ell}$. Fix a coordinate $\ell$ and define
\[
d_{ij,\ell}:=|h_{i\ell}-h_{j\ell}|,
\qquad
\hat d_{ij,\ell}:=|\hat h_{i\ell}-\hat h_{j\ell}|.
\]

Let
\[
A_\tau(t):=\mathbb E|t-\xi|,
\qquad
\xi\sim \mathcal N(0,\tau^2),
\]
and
\[
B_\tau:=\mathbb E|\xi_1-\xi_2|
=
\frac{2\tau}{\sqrt{\pi}},
\qquad
\xi_1,\xi_2\stackrel{\mathrm{i.i.d.}}{\sim}\mathcal N(0,\tau^2).
\]
For the ungated distance, if $y_j=y_i$, then
\[
\mathbb E[d_{ij,\ell}\mid y_j=y_i]
=
2\rho(1-\rho)A_\tau(|\mu_\ell|)
+
\rho^2B_\tau.
\]
If $y_j\neq y_i$, then
\[
\mathbb E[d_{ij,\ell}\mid y_j\neq y_i]
=
2(1-\rho)^2|\mu_\ell|
+
2\rho(1-\rho)A_\tau(|\mu_\ell|)
+
\rho^2B_\tau.
\]
Therefore,
\[
\mathbb E[d_{ij,\ell}\mid y_j\neq y_i]
-
\mathbb E[d_{ij,\ell}\mid y_j=y_i]
=
2(1-\rho)^2|\mu_\ell|.
\]

For the oracle-gated distance, if $y_j=y_i$, then
\[
\mathbb E[\hat d_{ij,\ell}\mid y_j=y_i]
=
2\rho(1-\rho)|\mu_\ell|,
\]
whereas if $y_j\neq y_i$, then
\[
\mathbb E[\hat d_{ij,\ell}\mid y_j\neq y_i]
=
2(1-\rho)|\mu_\ell|.
\]
Thus,
\[
\mathbb E[\hat d_{ij,\ell}\mid y_j\neq y_i]
-
\mathbb E[\hat d_{ij,\ell}\mid y_j=y_i]
=
2(1-\rho)^2|\mu_\ell|.
\]
Define the class-separation logit gaps
\[
\Delta_e
:=
\mathbb E[e_{ij}\mid y_j=y_i]
-
\mathbb E[e_{ij}\mid y_j\neq y_i],
\qquad
\Delta_{\hat e}
:=
\mathbb E[\hat e_{ij}\mid y_j=y_i]
-
\mathbb E[\hat e_{ij}\mid y_j\neq y_i].
\]
Since both logits are negative weighted sums of coordinate distances, summing over $\ell$ gives
\[
\Delta_{\hat e}
=
\Delta_e
=
2(1-\rho)^2
\sum_{\ell=1}^d w_\ell|\mu_\ell|.
\]

We next compare logit variances. For the oracle-gated feature,
\[
\hat d_{ij,\ell}\in\{0,|\mu_\ell|,2|\mu_\ell|\},
\]
so
\[
\mathrm{Var}(\hat d_{ij,\ell}\mid y_i,y_j)
\le
4\mu_\ell^2.
\]
By independence across coordinates,
\[
\mathrm{Var}(\hat e_{ij}\mid y_i,y_j)
\le
4\sum_{\ell=1}^d w_\ell^2\mu_\ell^2
=:C_{\mathrm g},
\]
where $C_{\mathrm g}$ is independent of $\tau$.

For the ungated feature, on the event $r_{i\ell}=r_{j\ell}=0$, which occurs with probability $\rho^2$,
\[
d_{ij,\ell}=|\xi_{i\ell}-\xi_{j\ell}|.
\]
Since
\[
\xi_{i\ell}-\xi_{j\ell}
\sim
\mathcal N(0,2\tau^2),
\]
we have
\[
\mathrm{Var}(|\xi_{i\ell}-\xi_{j\ell}|)
=
2\left(1-\frac{2}{\pi}\right)\tau^2.
\]
Hence
\[
\mathrm{Var}(e_{ij}\mid y_i,y_j)
\ge
2\rho^2
\left(1-\frac{2}{\pi}\right)
\tau^2
\sum_{\ell=1}^d w_\ell^2.
\]
Thus, the oracle gate preserves the class-separation logit gap while changing the logit variance from order $\tau^2$ to order $1$.

Finally, under the weak-logit regime, the softmax admits the first-order expansion
\[
\alpha_{ij}^{(T)}
=
\frac1K
+
\frac{e_{ij}-\bar e_i}{KT}
+
O_p\!\left(\frac{\delta_i^2}{T^2}\right),
\]
where
\[
\bar e_i:=\frac1K\sum_{k\in\mathcal N(i)}e_{ik},
\qquad
\delta_i:=\max_{j\in\mathcal N(i)}|e_{ij}|.
\]
The same expansion holds for $\hat\alpha_{ij}^{(T)}$ with $\hat e_{ij}$ in place of $e_{ij}$. Therefore, the leading class-discriminative term in the attention-weighted neighbor-label score is governed by the class-separation logit gap, while the leading fluctuation term is governed by the logit variance. Since the oracle gate preserves the former and removes the $\tau^2$-order growth of the latter, it yields a larger first-order node-level SNR when the missing-coordinate noise level is sufficiently large. Equivalently, for sufficiently large $\tau$,
\[
\widehat{\mathrm{SNR}}(T)
>
\mathrm{SNR}(T).
\]

\hfill $\square$\par

\section{Additional Experiments}

We further conduct controlled noise experiments to examine whether the learned variables behave consistently with the theoretical analysis. We use four homogeneous benchmarks, \texttt{Cora}, \texttt{Citeseer}, \texttt{Pubmed}, and \texttt{OGBN-Arxiv}, and evaluate both GAT-based and GATv2-based variants. As shown in Fig.~\ref{fig:noise-analysis}, the first two rows report the learned temperature $T$ under isotropic Gaussian feature noise with strength $\sigma$. The last two rows report the gate behavior under coordinate-missing noise, where each coordinate is missing with probability $\rho$ and replaced by noisy fill-in values. In the plots, the horizontal axis uses $p$ to denote the same missing probability $\rho$, and the vertical axis reports the mean gate activation at each layer.

The learned temperature generally increases as Gaussian noise becomes stronger, especially on \texttt{Cora}, \texttt{Citeseer}, and deeper layers of \texttt{OGBN-Arxiv}. This agrees with Theorem~\ref{thm:temp_gaussian}, as when global noise makes attention logits less reliable, a larger temperature smooths the attention distribution and reduces noise-driven concentration on individual neighbors.

For coordinate-missing noise, the gate mean tends to decrease in several datasets and layers, particularly in shallow layers, indicating stronger suppression of unreliable feature dimensions. This is consistent with Theorem~\ref{thm:gate_missing}, which suggests that gated graph attention can filter noisy coordinates and preserve useful feature information. The layer-wise behavior also reflects where corruption enters the model, since early layers directly process corrupted input features, whereas deeper layers operate on transformed representations.

\begin{figure}[htbp]
    \centering
    
    \begin{subfigure}{0.24\textwidth}
        \centering
        \includegraphics[width=\linewidth]{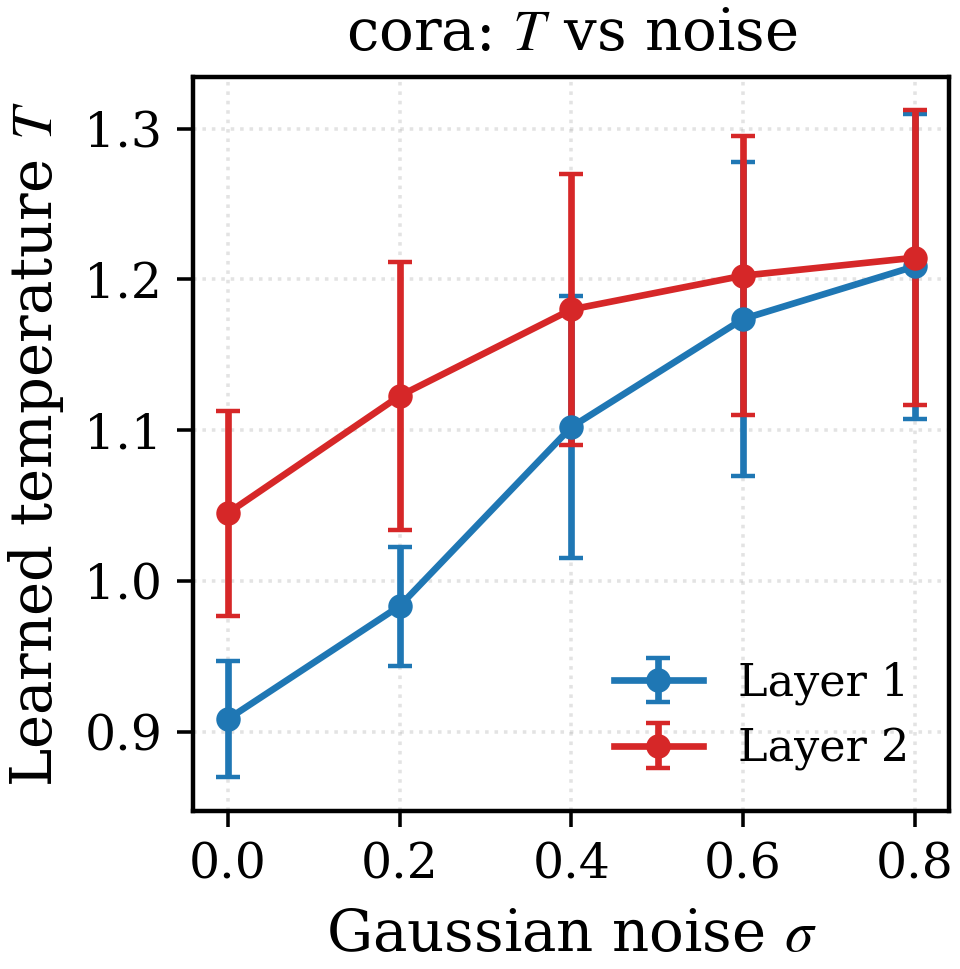}
        \caption{\texttt{Cora}, GAT}
    \end{subfigure}
    \begin{subfigure}{0.24\textwidth}
        \centering
        \includegraphics[width=\linewidth]{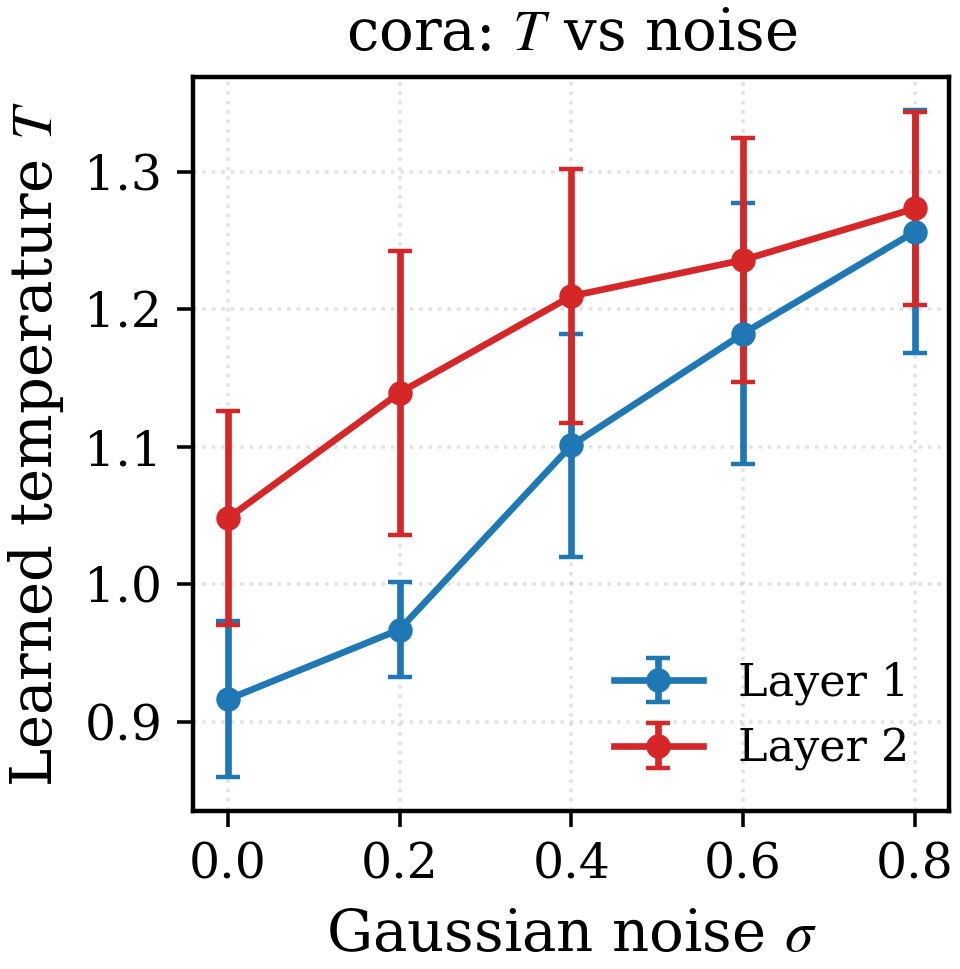}
        \caption{\texttt{Cora}, GATv2}
    \end{subfigure}
    \begin{subfigure}{0.24\textwidth}
        \centering
        \includegraphics[width=\linewidth]{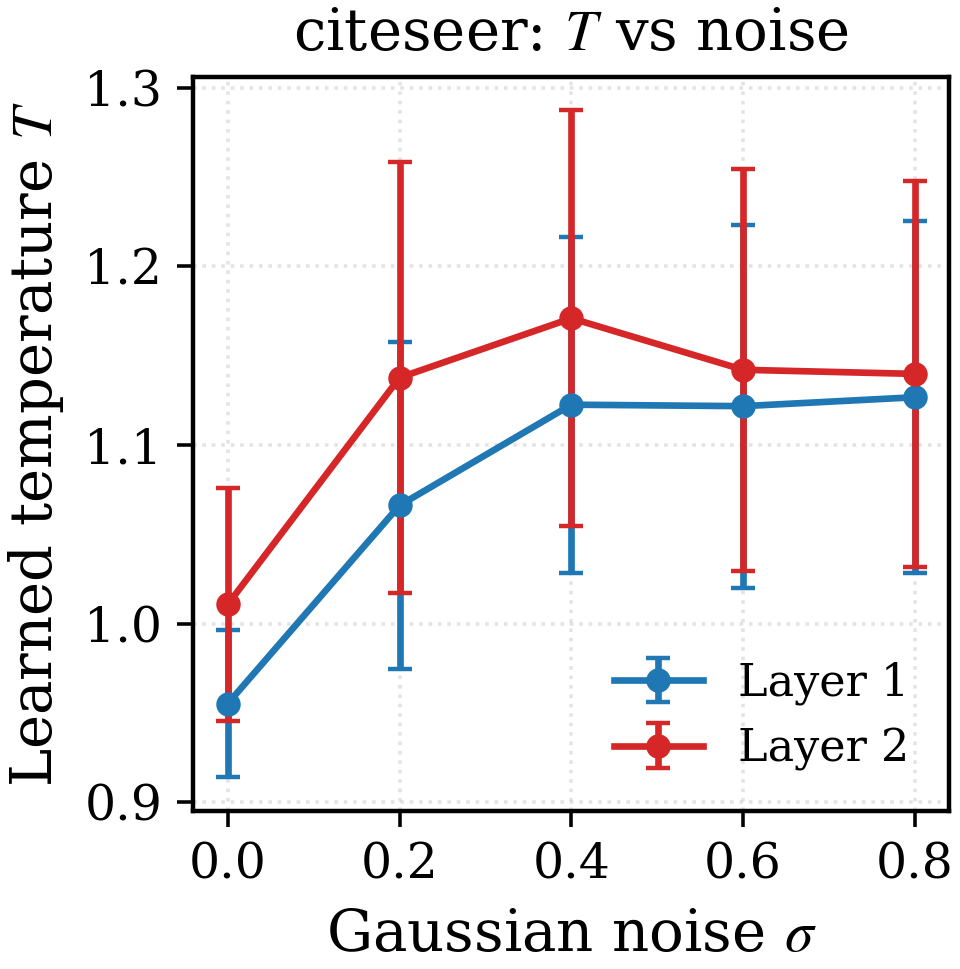}
        \caption{\texttt{Citeseer}, GAT}
    \end{subfigure}
    \begin{subfigure}{0.24\textwidth}
        \centering
        \includegraphics[width=\linewidth]{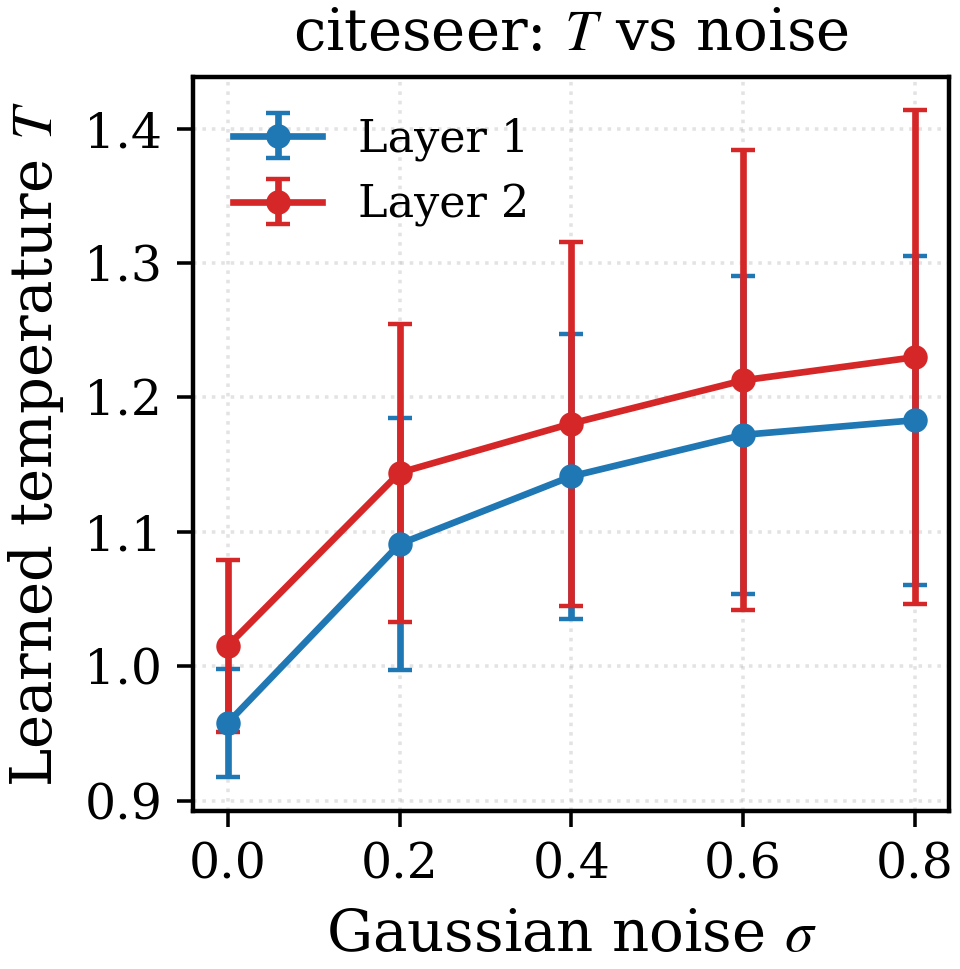}
        \caption{\texttt{Citeseer}, GATv2}
    \end{subfigure}

    \vspace{0.5em}

    \begin{subfigure}{0.24\textwidth}
        \centering
        \includegraphics[width=\linewidth]{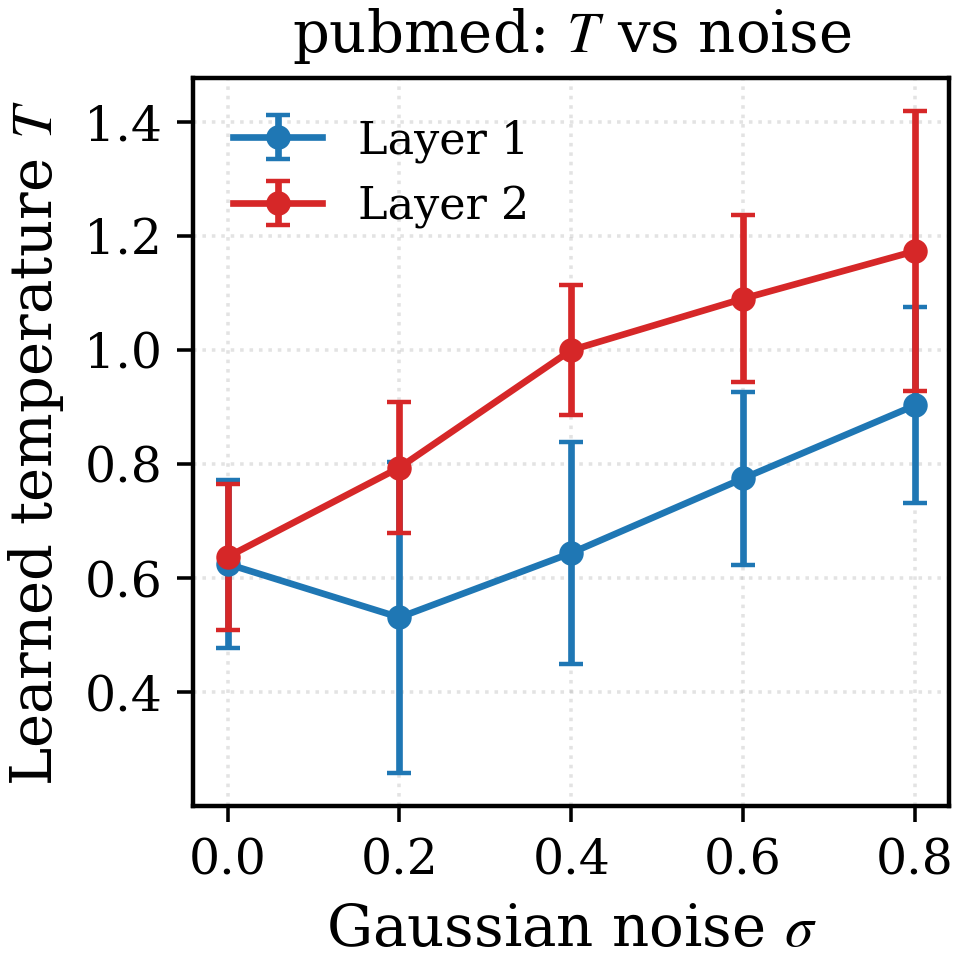}
        \caption{\texttt{Pubmed}, GAT}
    \end{subfigure}
    \begin{subfigure}{0.24\textwidth}
        \centering
        \includegraphics[width=\linewidth]{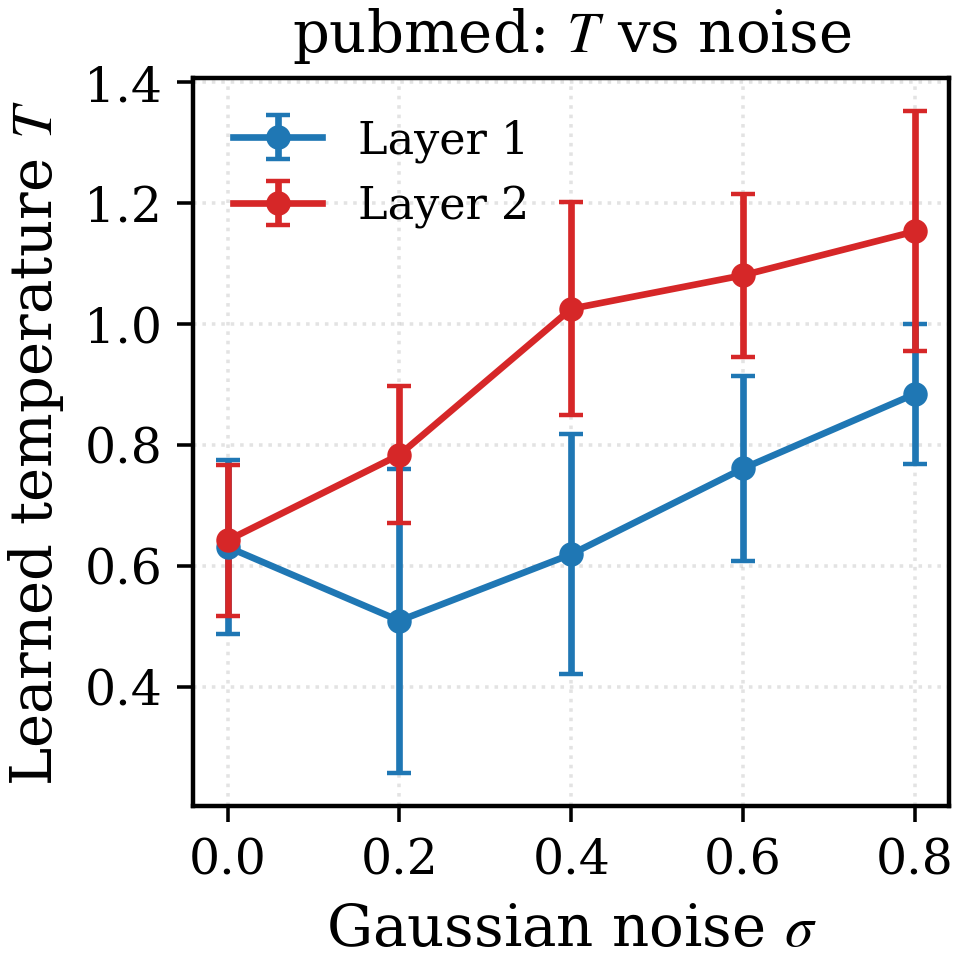}
        \caption{\texttt{Pubmed}, GATv2}
    \end{subfigure}
    \begin{subfigure}{0.24\textwidth}
        \centering
        \includegraphics[width=\linewidth]{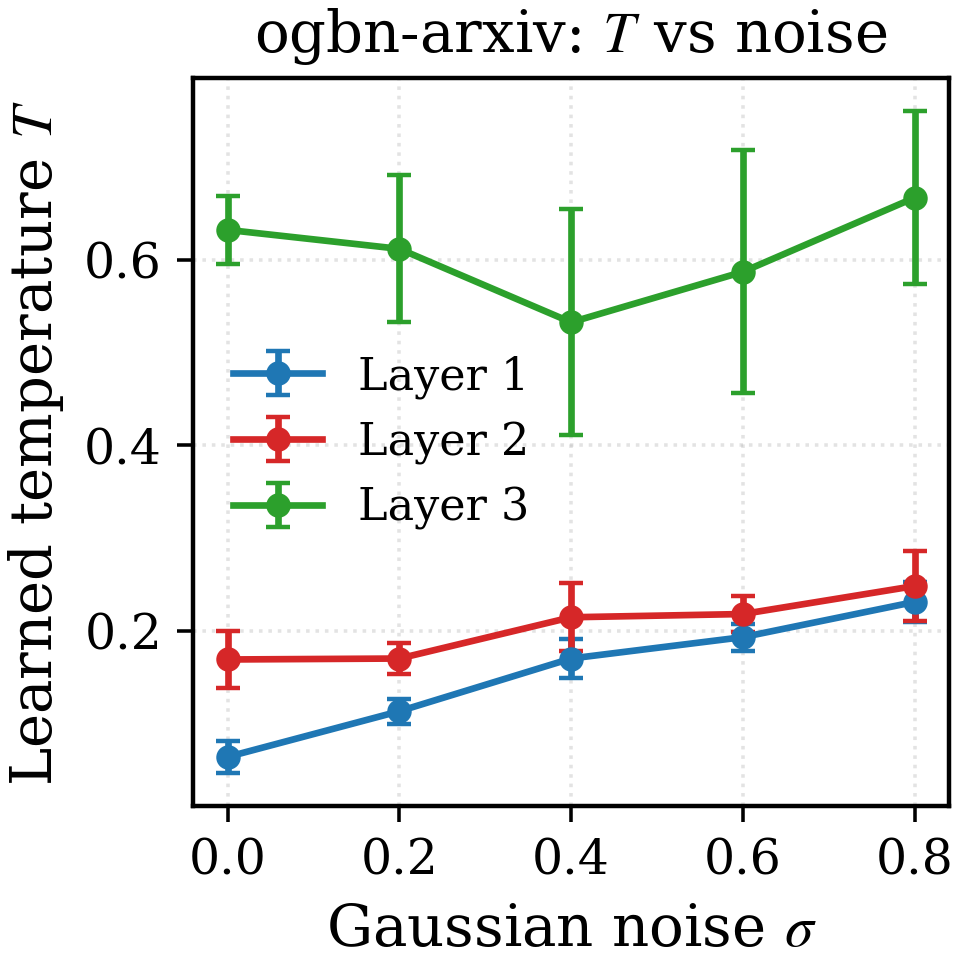}
        \caption{\texttt{OGBN-Arxiv}, GAT}
    \end{subfigure}
    \begin{subfigure}{0.24\textwidth}
        \centering
        \includegraphics[width=\linewidth]{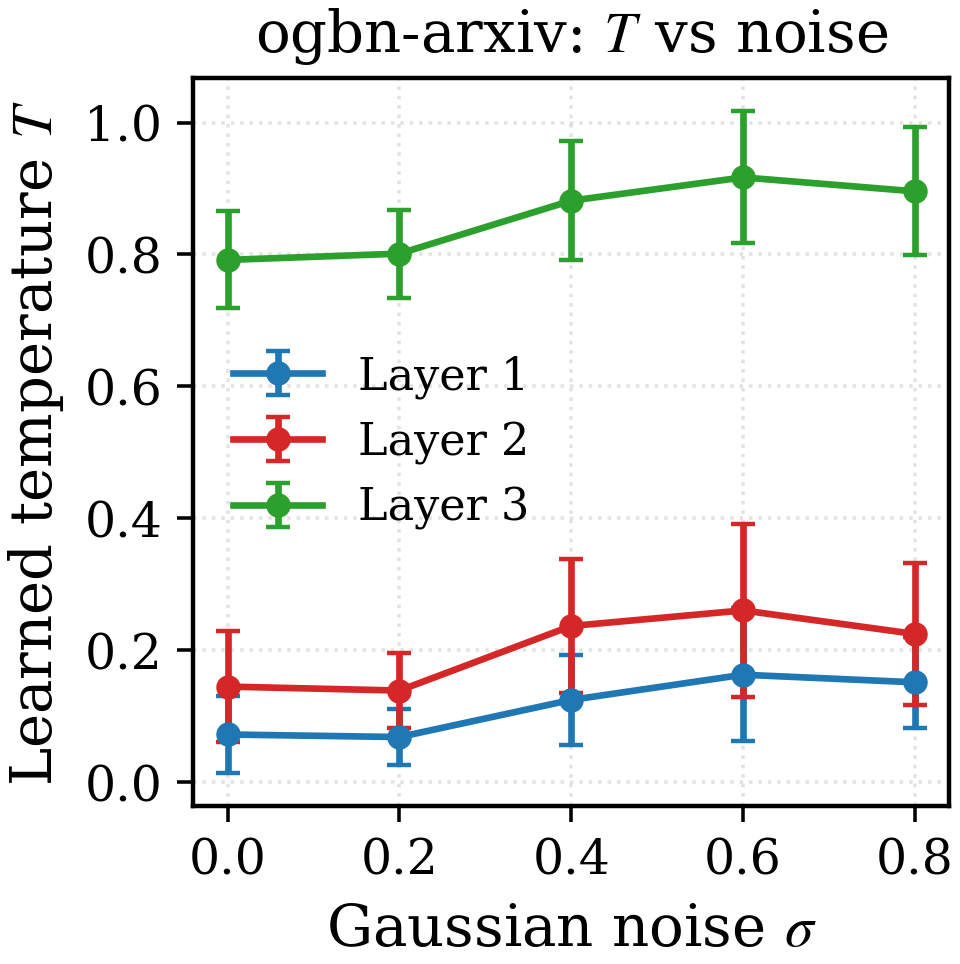}
        \caption{\texttt{OGBN-Arxiv}, GATv2}
    \end{subfigure}

    \vspace{0.5em}

    \begin{subfigure}{0.24\textwidth}
        \centering
        \includegraphics[width=\linewidth]{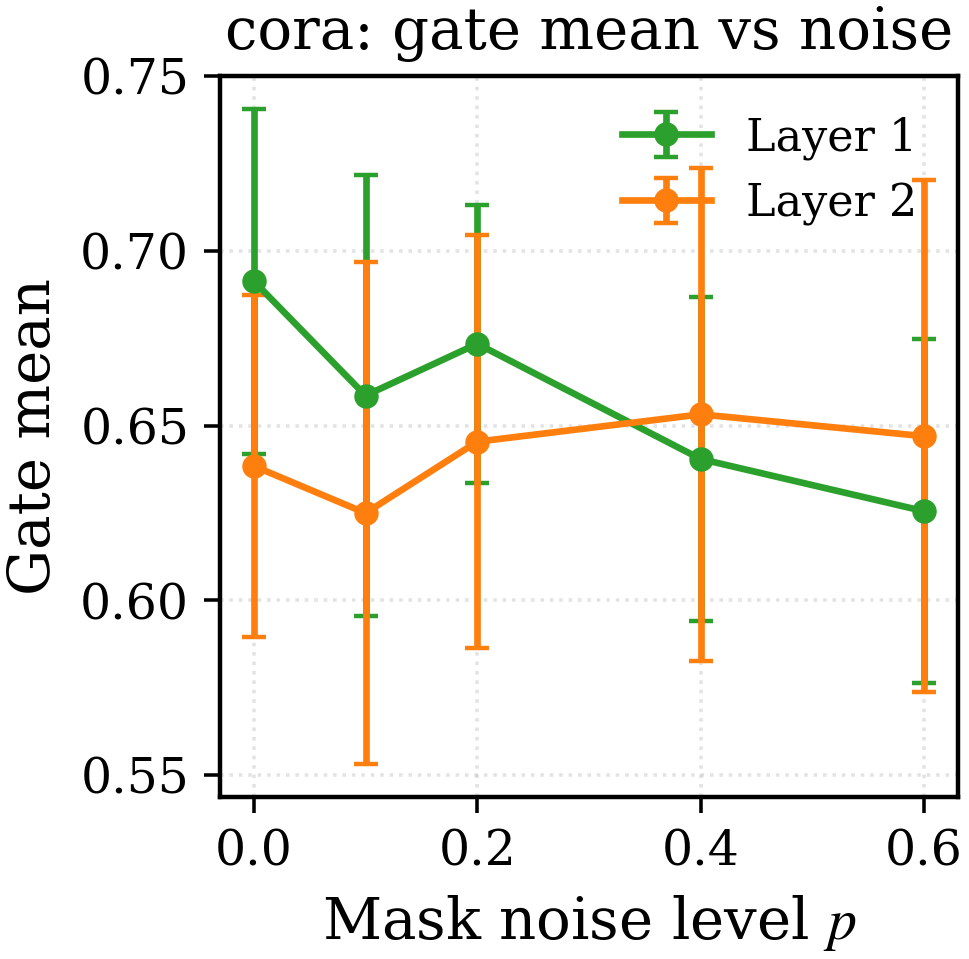}
        \caption{\texttt{Cora}, GAT}
    \end{subfigure}
    \begin{subfigure}{0.24\textwidth}
        \centering
        \includegraphics[width=\linewidth]{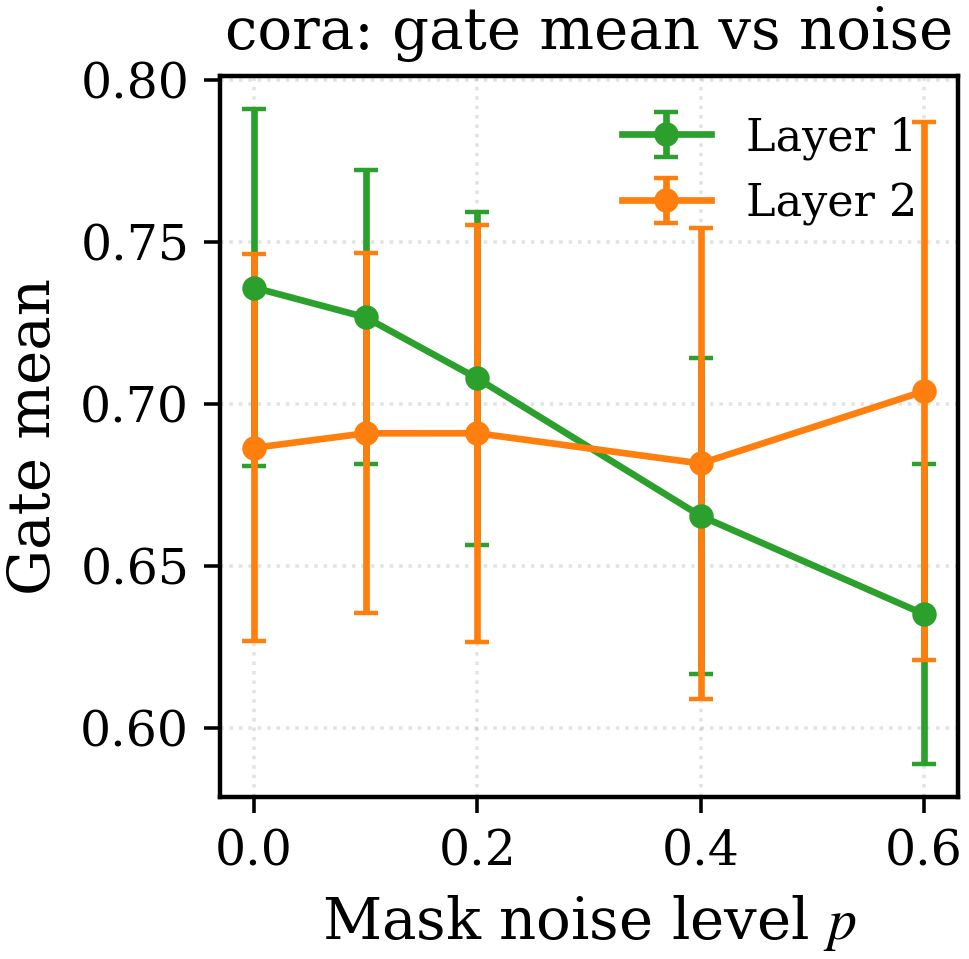}
        \caption{\texttt{Cora}, GATv2}
    \end{subfigure}
    \begin{subfigure}{0.24\textwidth}
        \centering
        \includegraphics[width=\linewidth]{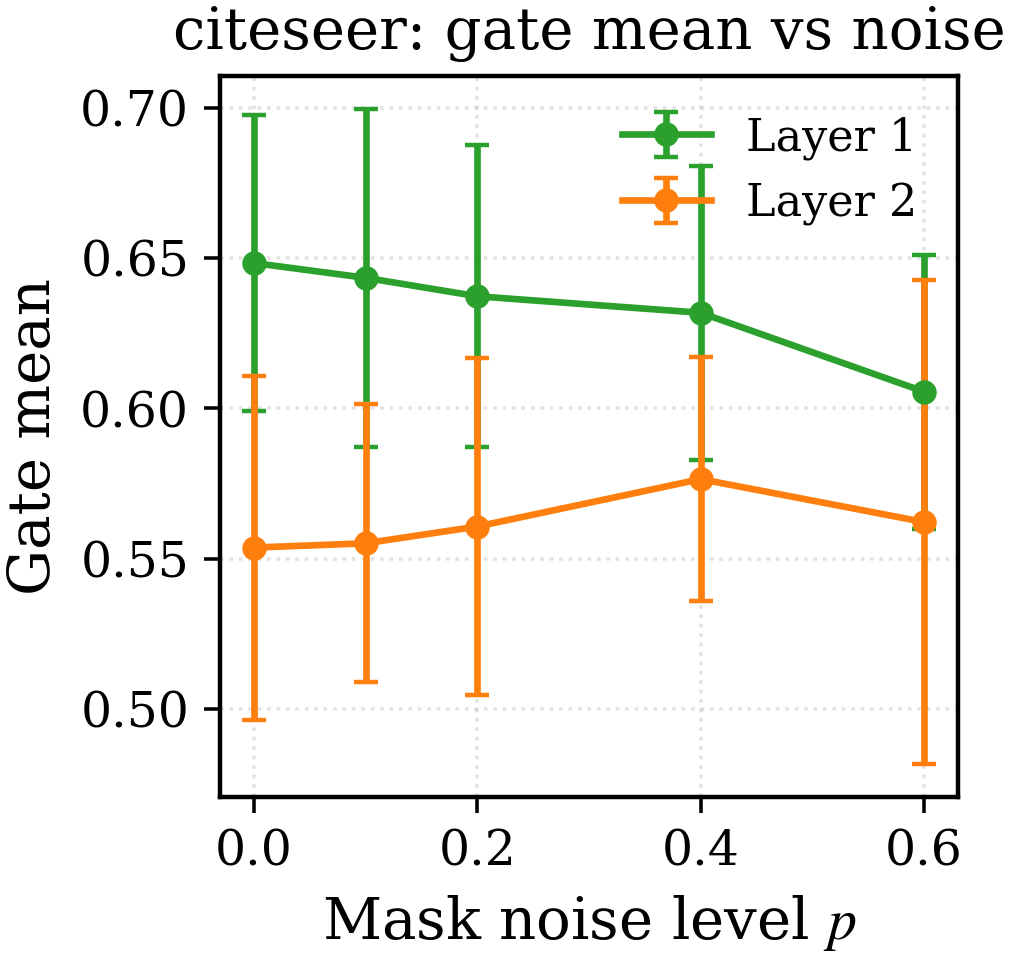}
        \caption{\texttt{Citeseer}, GAT}
    \end{subfigure}
    \begin{subfigure}{0.24\textwidth}
        \centering
        \includegraphics[width=\linewidth]{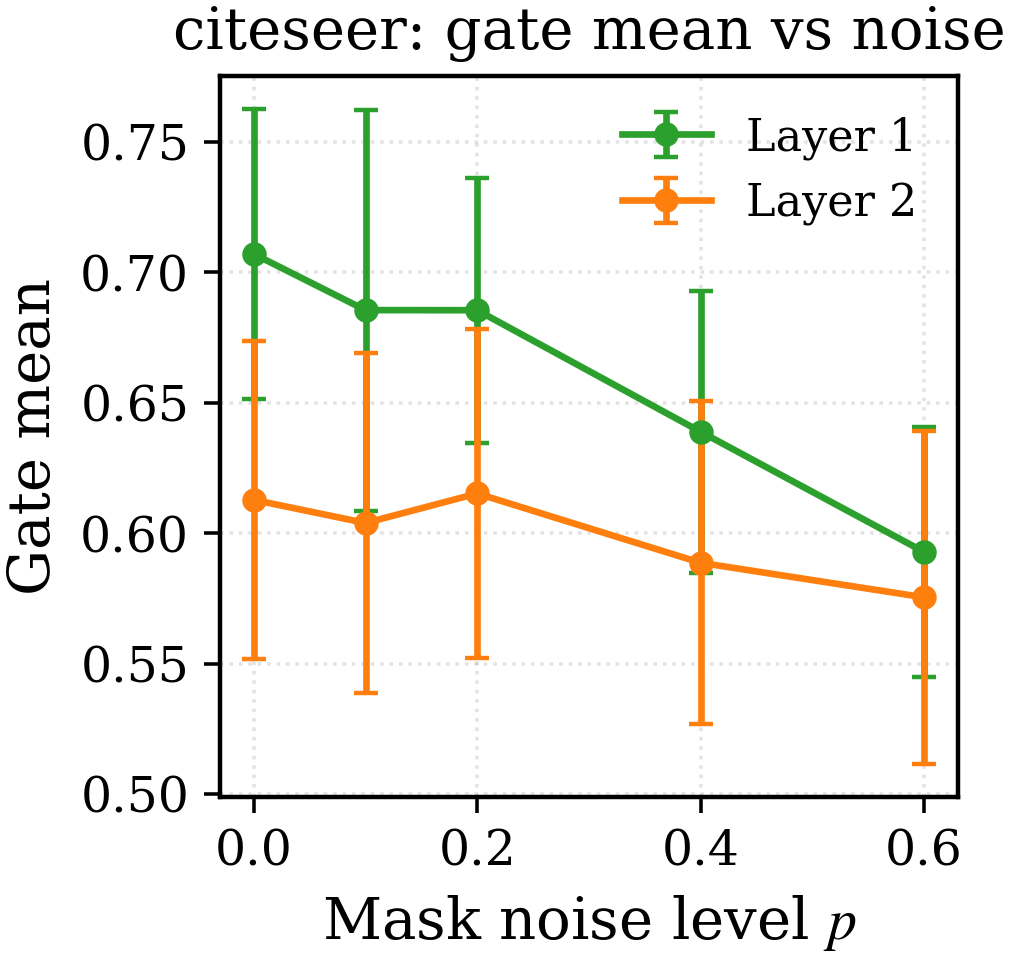}
        \caption{\texttt{Citeseer}, GATv2}
    \end{subfigure}

    \vspace{0.5em}

    \begin{subfigure}{0.24\textwidth}
        \centering
        \includegraphics[width=\linewidth]{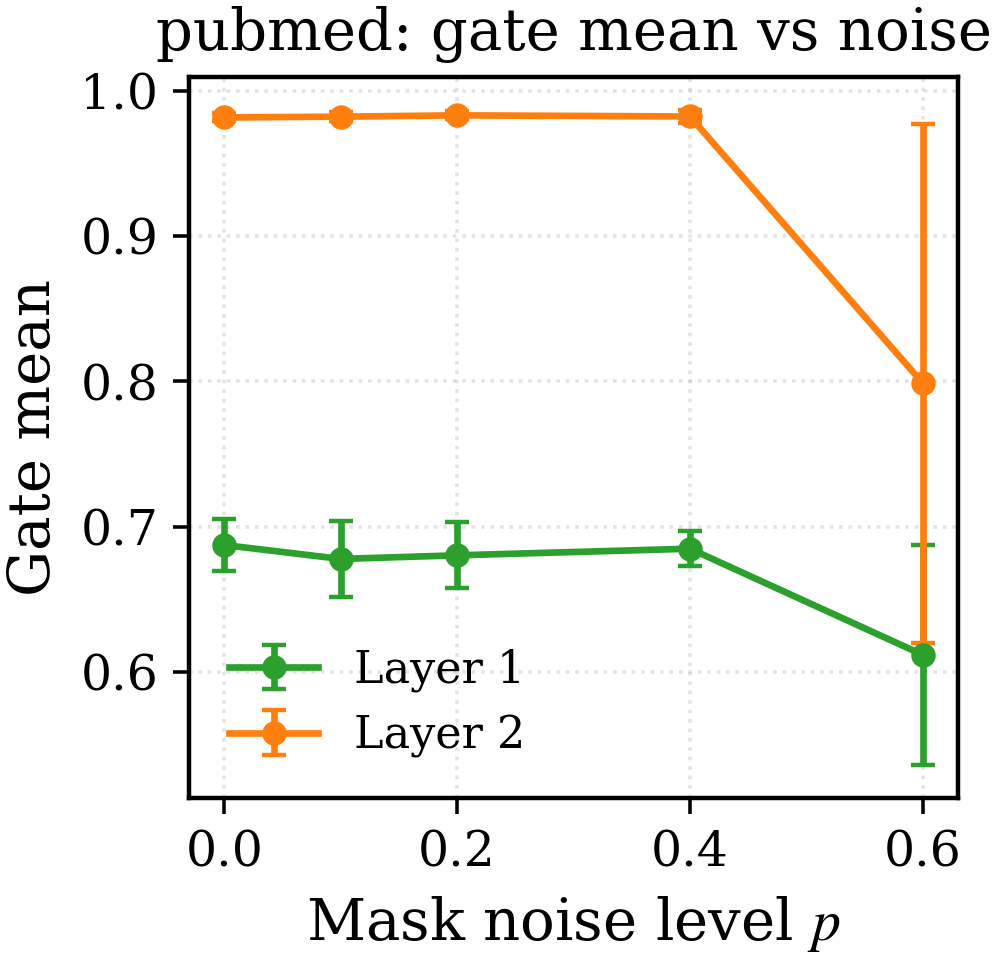}
        \caption{\texttt{Pubmed}, GAT}
    \end{subfigure}
    \begin{subfigure}{0.24\textwidth}
        \centering
        \includegraphics[width=\linewidth]{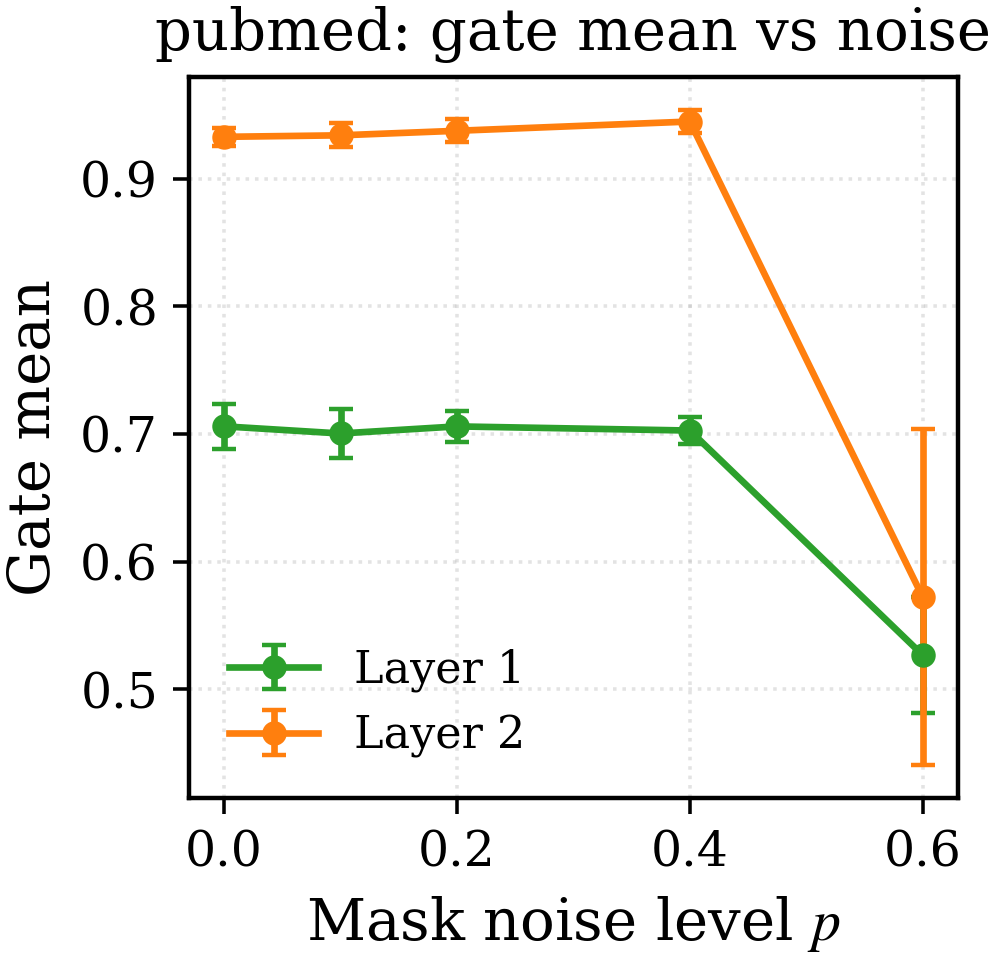}
        \caption{\texttt{Pubmed}, GATv2}
    \end{subfigure}
    \begin{subfigure}{0.24\textwidth}
        \centering
        \includegraphics[width=\linewidth]{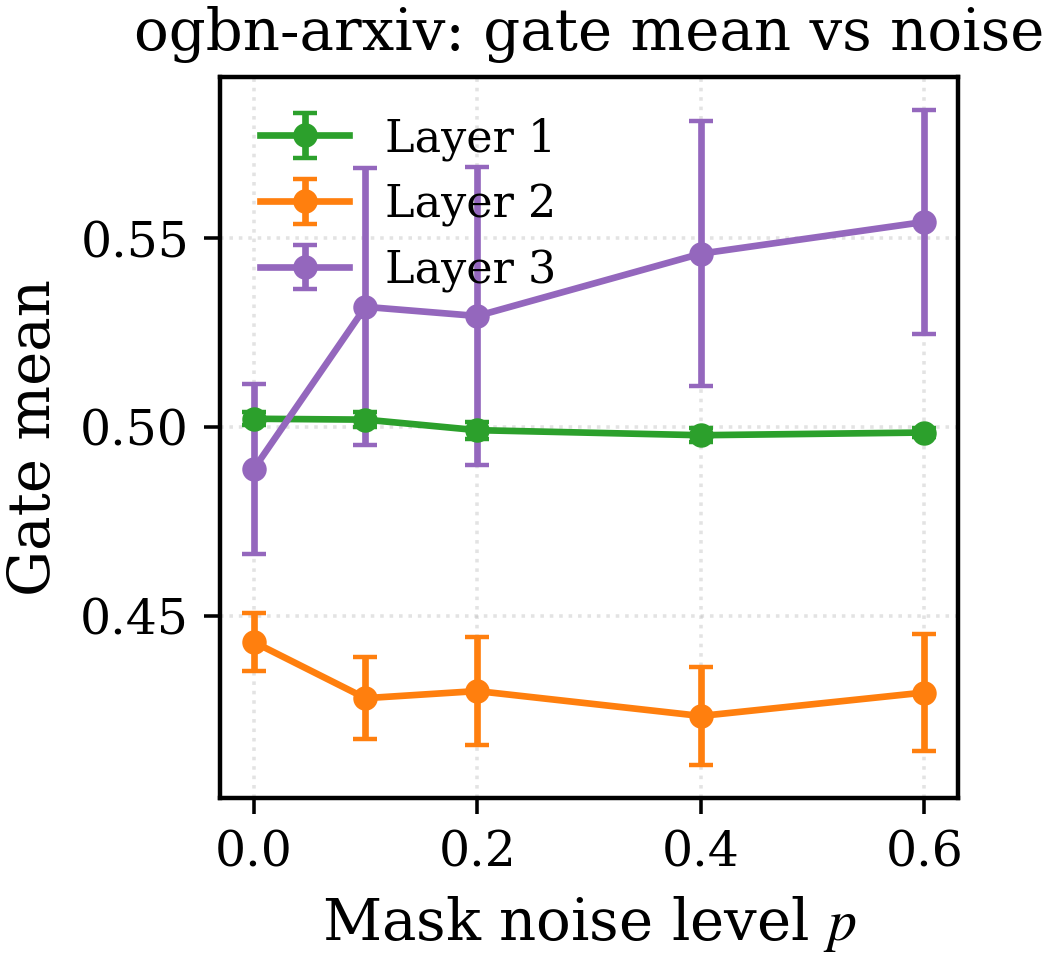}
        \caption{\texttt{OGBN-Arxiv}, GAT}
    \end{subfigure}
    \begin{subfigure}{0.24\textwidth}
        \centering
        \includegraphics[width=\linewidth]{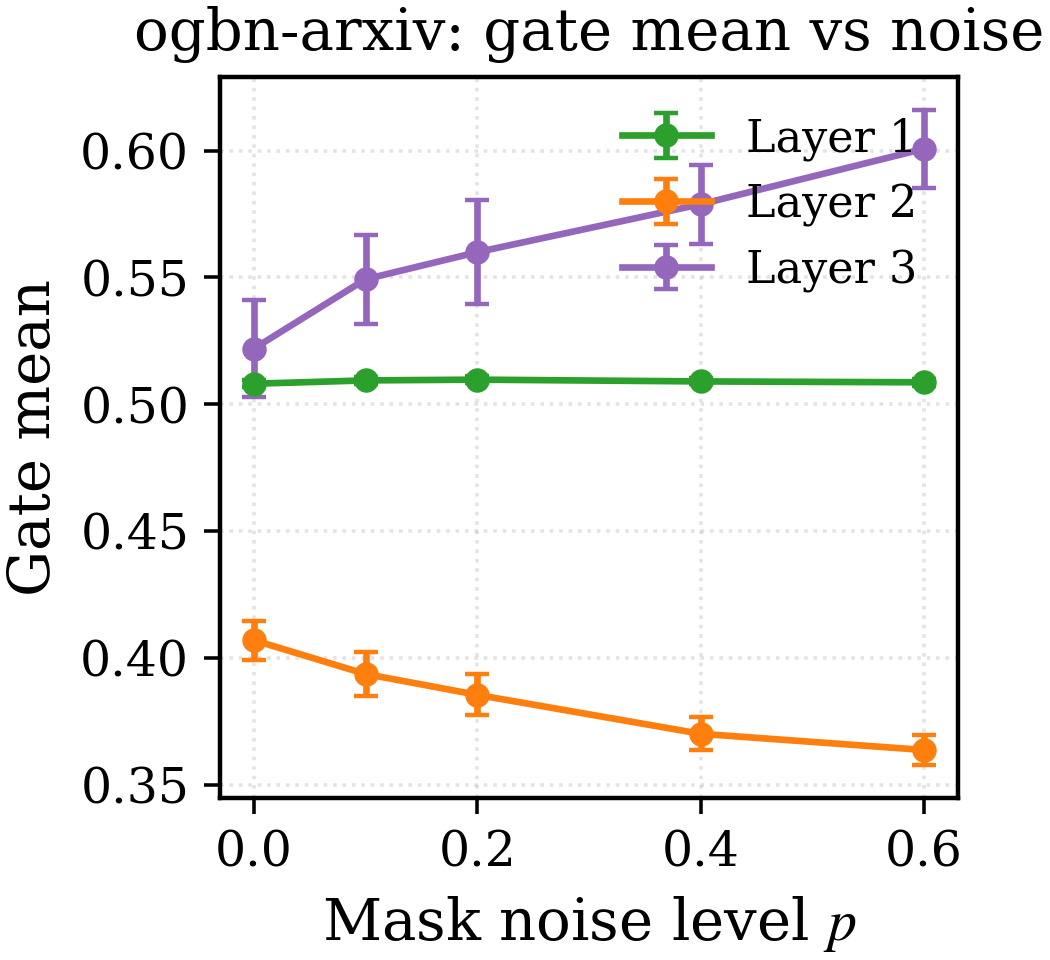}
        \caption{\texttt{OGBN-Arxiv}, GATv2}
    \end{subfigure}

    \caption{Controlled noise analysis on GAT-based and GATv2-based variants. The first two rows show learned temperature $T$ under global Gaussian noise with strength $\sigma$. The last two rows show mean gate activation under coordinate-missing noise, where the plotting variable $p$ denotes the missing probability $\rho$.}
    \label{fig:noise-analysis}
\end{figure}








\end{document}